\documentclass[10pt,twocolumn,letterpaper]{article}
\usepackage{iccv}

\usepackage{times}
\usepackage{graphicx}
\usepackage{amsmath}
\usepackage{amssymb}
\usepackage{amsthm}
\newtheorem{prop}{Proposition}

\newtheorem{lemma}{Lemma}
\newtheorem{theorem}{Theorem}

\usepackage{epsfig}
\usepackage{float}
\usepackage{graphics}
\usepackage{algorithmic}
\DeclareMathOperator*{\argmin}{arg\;min}
\DeclareMathOperator*{\argmax}{arg\;max}

\usepackage[pagebackref=true,breaklinks=true,letterpaper=true,colorlinks,bookmarks=false]{hyperref}

\iccvfinalcopy 


\ificcvfinal\pagestyle{empty}\fi

\begin{document}

\title{Unsupervised Domain Adaptation via Regularized Conditional Alignment}

\author{Safa Cicek, Stefano Soatto \\
UCLA Vision Lab\\
University of California, Los Angeles, CA 90095\\
{\tt\small \{safacicek,soatto\}@ucla.edu}
}

\maketitle

\begin{abstract}
We propose a method for unsupervised domain adaptation that trains a shared embedding to align the joint distributions of inputs (domain) and outputs (classes), making any classifier agnostic to the domain. Joint alignment ensures that not only the marginal distributions of the domain are aligned, but the labels as well. We propose a novel objective function that encourages the class-conditional distributions to have disjoint support in feature space. We further exploit adversarial regularization to improve the performance of the classifier on the domain for which no annotated data is available.
\end{abstract}

\section{Introduction}

In the context of classification, unsupervised domain adaptation (UDA) consists of modifying a classifier trained on a labeled dataset, called the ``source,'' so it can function on data from a different ``target'' domain, for which no annotations are available. More in general, we want to train a model to operate on input data from both the source and target domains, despite absence of annotated data for the latter. For instance, one may have a synthetic dataset, where annotation comes for free, but wish for the resulting model to work well on real data, where manual annotation is scarce or absent \cite{richter2016playing}.  

\begin{figure}[t]
\centering
\includegraphics[width=9cm]{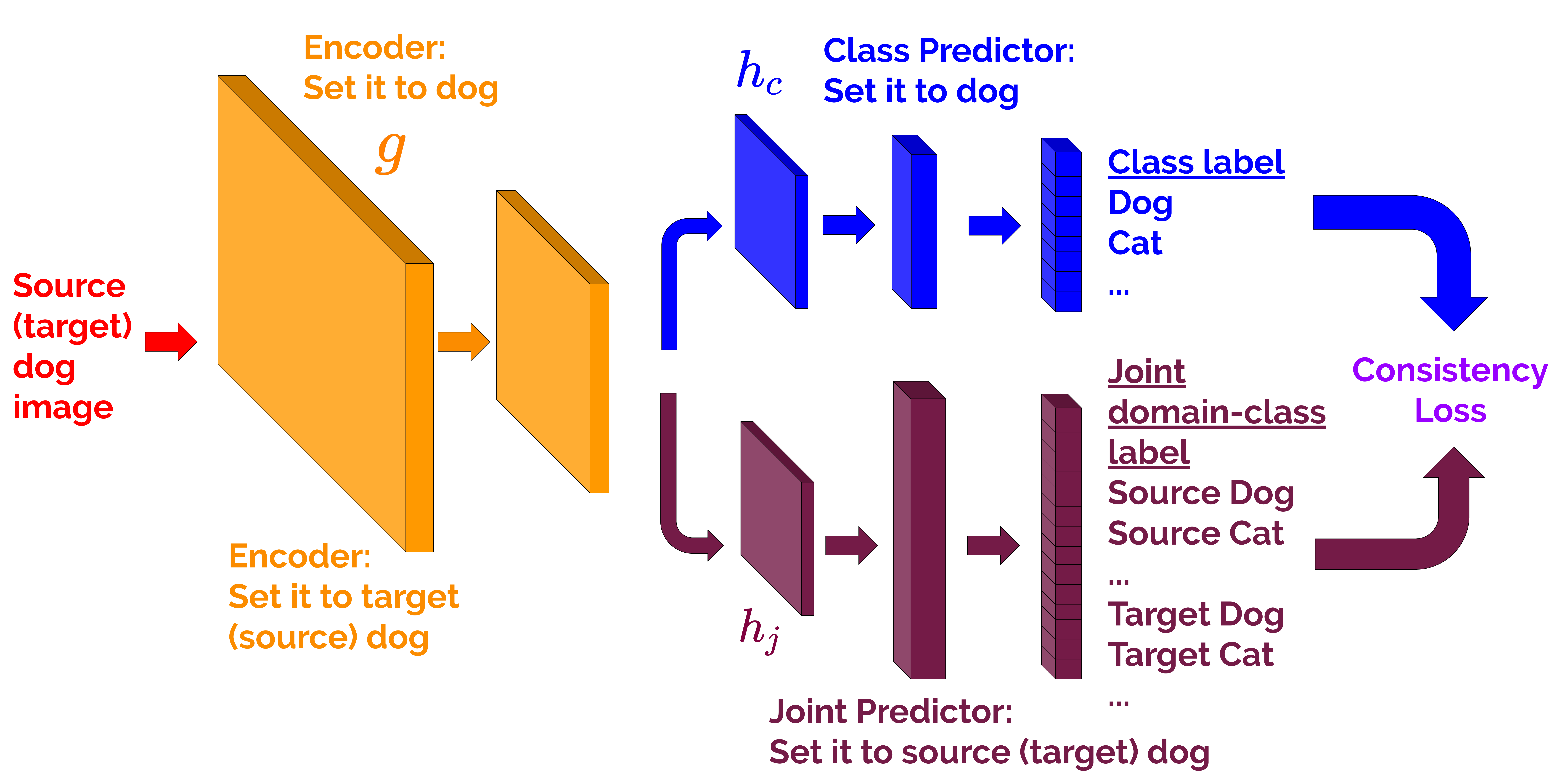}
\caption{The network structure of the proposed approach. We propose to learn a joint distribution $P(d,y)$ over domain label $d$ and class label $y$ by a joint predictor (purple). The encoder (orange) is trained to confuse this joint predictor by matching the features corresponding to the same category samples of both domains. Since labels for the target data is not known, predictions of the class predictor (blue) on the target data is used with the help of consistency loss. Unlabeled data is further exploited with input smoothing algorithm VAT \cite{miyato2017virtual} from the SSL literature.}
\label{cda_network}
\end{figure}

The most successful methods learn the parameters of a deep neural network using adversarial (min-max) criteria. The idea is to simultaneously recognize the class (output) as well as the domain ({\em e.g.}, ``real vs. synthetic'') by training the classifier to work as well as possible on both while encoder is fooling the discriminator for the latter. In a sense, the classifier becomes agnostic to the domain. This can be understood as aligning the marginal distribution of the inputs from the two domains. Unfortunately, this does not guarantee successful transfer, for it is possible that the source (say synthetic images) be perfectly aligned with the target (say natural images), and yet a natural image of a cat map to a synthetic image of a dog. It would be desirable, therefore, for the adaptation to align the outputs, along with the inputs. This prompted other methods to align, instead of the marginal distributions, the join or conditional distribution of domain and class. This creates two problems: First, the target class labels are unknown; second, since there is a shared representation of the inputs, aligning the joint distributions may cause them to collapse thus losing the discriminative power of the model. 

To address these problems, we propose a method to perform the alignment of the joint distribution (Sect. \ref{section_loss}). We employ ideas from semi-supervised learning (SSL) to improve generalization performance (Sect. \ref{section_exploiting}). We propose an optimization scheme that uses a two-folded label space. The resulting method performs at the state of the art without pushing the limits of hyperparameter optimization (Sect. \ref{section_empirical}). We analyze the proposed objective function in the supervised setting and prove that the optimal solution conditionally aligns the distributions while keeping them discriminative (Sect. \ref{section_analysis}). Finally, we discuss our contribution in relation to the vast and growing literature on UDA (Sect. \ref{section_discussion}).

\subsection*{Formalization}

We are given $N^s$ labeled source samples $x^s \in X^s$ with corresponding labels $y^s \in Y^s$ and $N^t$ unlabeled target samples, $x^t \in X^t$. The entire training dataset $X$ has cardinality $N = N^s + N^t$. Labeled source data and unlabeled target data are drawn from two different distributions (domain shift): $(x^s, y^s) \sim P^s$, $(x^t, y^t) \sim P^t$ where their discrepancy, measured by Kullbach-Liebler's (KL) divergence, is $KL(P^s || P^t) > 0$ (covariate shift). Both distributions are defined on $X \times Y$ where $Y=\{1,...,K\}$. Marginal distributions are defined on $X$ and samples are drawn from them as $x^s \sim P^s_x$, $x^t \sim P^t_x$. Given finite samples $\{(x^s_i, y^s_i)\}_{i=1}^{N^s} := \{(x^s_1,y^s_1),(x^s_2,y^s_2),...,(x^s_{N^s},y^s_{N^s})\}$ from $P^s$ and $ \{(x^t_i)\}_{i=1}^{N^t} := \{x^t_1,x^t_2,...,x^t_{N^t}\}$ from $P^t_x$, the goal is to learn a classifier $f: X \rightarrow Y$ with a small risk in the target domain. 
This risk can be measured with cross-entropy: \begin{align}
\min_f  &\  E_{(x,y) \sim P^t} \ell_{CE}(f(x); y)
\end{align} where \begin{equation}
\ell_{CE}(f(x); y) := - \langle y, \log f(x)\rangle
\label{ce_loss}
\end{equation} is the cross-entropy loss calculated for one-hot, ground-truth labels $y \in \{0,1\}^K$ and label estimates $f(x) \in \mathbb R^K$, which is the output of a deep neural network with input $x$, and $K$ is the number of classes.

\section{Proposed Method}

In this section, we show how to formalize the criterion for aligning both inputs and outputs, despite the latter being unknown for the target classes in the absence of supervision.

Alignment of the marginal distributions can be done using Domain Adversarial Neural Networks (DANN) \cite{ganin2014unsupervised}, that add to the standard classification loss for the source data a binary classification loss for the domain: Source vs. target. If all goes well, the class predictor classifies the {\em source} data correctly, and the binary-domain predictor is unable to tell the difference between the source and the target data. Therefore, the class predictor might also classify the target data correctly. Unfortunately, this is not guaranteed as there can be a misalignment of the output spaces that cause some class in the source to map to a different class in the target, {\em e.g.}, a natural cat to a synthetic dog.

The {\em key idea} of our approach is to impose {\em not} a binary adversarial loss on the domain alignment, but a $2K$-way adversarial loss, as if we had $2K$ possible classes: The first $K$ are the known \textit{source classes}, and the second $K$ are the unknown \textit{target classes}. We call the result a {\em joint domain-class predictor} or {\em joint predictor} in short, since it learns a distribution over domain and class variables. The encoder will try to fool the predictor by minimizing the classification loss between a dog sample in the source and a sample in the target domain whose predicted label in the aligned domain is also dog.

During training, the probabilities assigned to the first $K$ labels of the joint predictor are very small for the target samples, and they eventually converge to zero. Therefore, we need a separate mechanism to provide pseudo-labels to the target samples to be aligned by the joint predictor. For this, we train another predictor, that we call {\em class predictor} outputting only class labels. The class predictor is trained on both the source data using ground-truth labels and the target data using semi-supervised learning (SSL) regularizers.  

Both the joint predictor and the class predictor can be used for inference. However, we find that the class predictor performs slightly better. We conjecture this is because joint predictor is trained on a harder task of domain and class prediction while only the latter one is needed at inference time. 

We consider UDA as a two-fold problem. The first step deals with domain shift by aligning distributions in feature space. Given a successful alignment, one can use a source-only trained model for inference. But, once the domains are matched, it is possible to further improve generalization by acting on the label space. Ideas from SSL can help to that end \cite{miyato2017virtual}. 

The overall architecture of the model is described in Fig. \ref{cda_network}.

\subsection{Network structure}

We denote the shared encoder with $g$, the class predictor with $h_c$, the joint predictor with $h_j$ and the overall networks as $f_c=h_c \circ g$ and $f_j=h_j \circ g$. Then, the class-predictor output for an input $x$ can be written as, \begin{align}
f_c(x) = h_c(g(x)) \in \mathbb{R}^{K}. 
\end{align} Similarly, the joint-predictor output can be written as, \begin{align}
f_j(x) = h_j(g(x)) \in \mathbb{R}^{2K}.
\end{align} 

\subsection{Loss functions}
\label{section_loss}
The class predictor is the main component of the network which is used for inference. Its marginal features are aligned by the loss provided by the joint predictor. The class predictor is trained with the labeled source samples using the cross-entropy loss. This source classification loss can be written as,
\begin{align}
L_{sc}(f_c) = E_{(x,y) \sim P^s} \ell_{CE}(f_c(x), y).
\label{loss_sc}
\end{align} Both the encoder ($g$) and the class predictor $(h_{c})$ are updated while minimizing this loss. 

We also update the joint predictor with the same classification loss for the labeled source samples. This time, only the joint-predictor ($h_j$) is updated. The joint-source classification loss is
\begin{align}
L_{jsc}(h_j) = E_{(x,y) \sim P^s} \ell_{CE}(h_j(g(x)), [y,\, \mathbf{0}])
\label{loss_jsc}
\end{align} where $\mathbf{0}$ is the zero vector of size $K$, chosen to make the last $K$ joint probabilities zero for the source samples. 

Similarly, the joint predictor is trained with target samples. As ground-truth labels for the target samples are not given, label estimates from the class predictors are used as pseudo-labels. The joint target classification loss is \begin{align}
L_{jtc}(h_j) = E_{x \sim P^t_x} \ell_{CE}(h_j(g(x)), [\mathbf{0},\, \hat y] )
\label{loss_jtc}
\end{align} where $\hat y = e_k$ and $k = \argmax_{k} f_c(x)[k] = \argmax_{k} h_c(g(x))[k]$, $e_k$ is the identity of size $K$ whose $k$th element is $1$.\footnote{We use the notation $x[k]$ for indexing the value at the $k$th index of the vector $x$.} Here, we assume that the source-only model achieves reasonable performance on the target domain (e.g. better than a chance). For experiments where the source-only trained model has poor performance initially, we apply this loss after the class predictor is trained for some time. Since the joint predictor is trained with the estimates of the class predictor on the target data, it can also be interpreted as a {\em student} of the class predictor. 

The goal of introducing a joint predictor was to align label-conditioned feature distributions. For this, encoders are trained to fool the joint predictor as in \cite{ganin2014unsupervised}. Here, we apply conditional fooling. The joint source alignment loss is
\begin{align}
\label{loss_jsa}
L_{jsa}(g) = E_{(x,y) \sim P^s} \ell_{CE}(h_j(g(x)), [\mathbf{0},\, y]).
\end{align} 
The encoder is trained to fool by changing the joint label from $[y, \, \mathbf{0}]$ to $[\mathbf{0},\,  y]$. Similary, the joint-target alignment loss is defined by changing the pseudo-labels from $[\mathbf{0},\, \hat y]$ to $[\hat y,\,  \mathbf{0}]$,
\begin{align}
L_{jta}(g) = E_{x \sim P^t_x} \ell_{CE}(h_j(g(x)), [\hat y,\,  \mathbf{0}]).
\label{loss_lcta}
\end{align} The last two losses are minimized only by the encoder $g$.

\subsection{Exploiting unlabeled data with SSL regularizers}
\label{section_exploiting}

\begin{figure}[t]
\centering
\includegraphics[width=9cm]{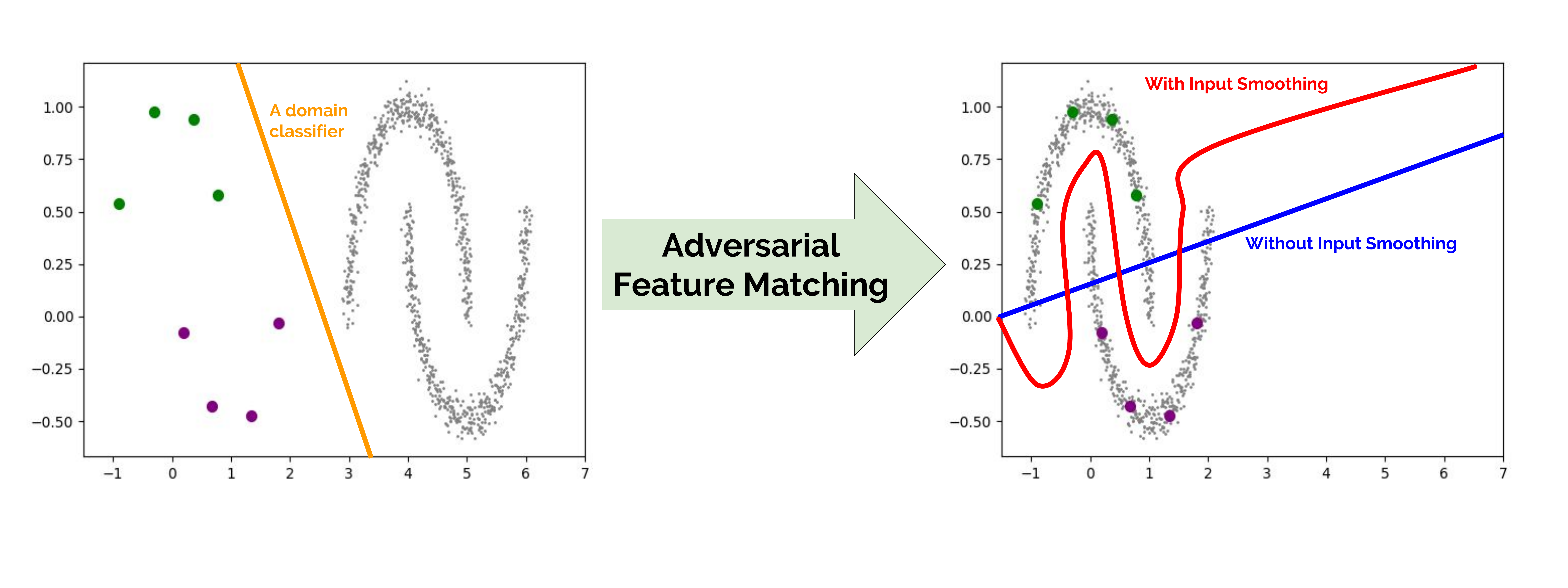}
\caption{{\bf Left.} In the UDA setting, there exist domain classifiers (e.g. orange line segment) being able to distinguish the source samples (green and purple dots) from the target samples (gray dots). Conditional feature matching is applied until there is no such classifier in the finite-capacity classifier space. As a result, the label-conditioned feature distributions of the source and the target data are matched. {\bf Right.} Once the features are matched, exploiting unlabeled data using SSL regularizers like VAT \cite{miyato2017virtual} becomes trivial. Only using labeled samples (green and purple dots) gives a poor decision boundary (blue line segment). When input adversarial training is applied using unlabeled samples (gray dots), desired decision boundary is achieved (red curve). Best viewed in color.}
\label{ssl_uda}
\end{figure} 

Once features of the source and the target domains are matched, our formulation of UDA turns into a semi-supervised learning problem. In a way, adversarial domain adaptation deals with the large domain shift between source and target datasets while adversarial input smoothing removes the shift in predictions within a small neighborhood of a domain (See Fig. \ref{ssl_uda}).

For a discriminative model to exploit unlabeled data, there has to be some prior on the model parameters or on the unknown labels \cite{chapelle2009semi}. Applying entropy minimization for the predictions on the unlabeled data is a well-known regularizer in the SSL literature \cite{grandvalet2005semi,krause2010discriminative,cicek2018saas}. This regularization forces decision boundaries to be in the low-density region, a desired property under the cluster assumption \cite{chapelle2009semi}. Our class predictor is trained to minimize this target entropy loss, \begin{equation}
L_{te}(f_c) = E_{x \sim P^t_x} \ell_{E}(h_c(g(x))) 
\label{ent_loss}
\end{equation} where $\ell_{E}(f(x)) := - \langle f(x), \log f(x)\rangle$. Since the joint predictor is already trained on the low-entropy estimates of the class predictor, it is enough to apply it to the class predictor.  
Minimizing entropy satisfies the cluster assumption only for Lipschitz classifiers \cite{grandvalet2005semi}. The Lipschitz condition can be realized by applying adversarial training as suggested by \cite{miyato2015distributional,miyato2017virtual}. VAT \cite{miyato2017virtual} makes a second-order approximation for adversarial input perturbations $\Delta x$ and proposes the following approximation to the adversarial noise for each input $x$: \begin{align}
& \Delta x \approx \epsilon_x \frac{r}{||r||_2} \nonumber \\
{\rm subject \ to} &\ r =  \nabla_{\Delta x} \ell_{CE}( f(x), f(x+\Delta x) ) \Big|_{\Delta x = \xi d}
\label{vat_eq}
\end{align} where $d \sim N(0,1)$. Therefore, the regularization loss of \cite{miyato2015distributional,miyato2017virtual} is
\begin{align}
\ell_{VAT}(f(x)) := \ell_{CE}(f(x), f(x+\epsilon_x \frac{r}{||r||_2}) ) \nonumber  \\
{\rm subject \ to \ } r =  \nabla_{\Delta x} \ell_{CE}( f(x), f(x+\Delta x) ) \Big|_{\Delta x = \xi d}
\label{vat_loss}
\end{align} for one input sample $x$. We will apply this regularizer both on the source and the target training data as in \cite{shu2018dirt,kumar2018co}. So, the source and target losses are given as follows:
\begin{align}
L_{svat}(f_c) = E_{(x,y) \sim P^s} \ell_{VAT}(f_c(x))
\end{align} and
\begin{align}
L_{tvat}(f_c) = E_{x \sim P^t_x} \ell_{VAT}(f_c(x)).
\end{align} SSL regularizations can be applied in a later stage once feature matching is achieved \cite{shu2018dirt}. But, we find that in most tasks, applying SSL regularizers from the beginning of the training also works well. More details are given in the Supp. Mat.

We combine the objective functions introduced in this section and in the previous section. The overall adversarial loss functions for the source and the target samples can be written as follows,
\begin{align}
L_{adv}(g) = \lambda_{jsa} L_{jsa}(g) + \lambda_{jta} L_{jta}(g) 
\label{loss_adv}
\end{align}
The remaining objective functions are
\begin{align}
L(g, h_{j}, h_{c}) = L_{s}(g, h_{j}, h_{c}) + \lambda_t L_{t}(g, h_{j}, h_{c}). 
\label{loss_all}
\end{align}
where
\begin{align}
L_{s}(g, h_{j}, h_{c}) = L_{sc}(f_c) + \lambda_{svat} L_{svat}(f_c) + \nonumber\\ 
\lambda_{jsc} L_{jsc}(h_{j})
\end{align}
\begin{align}
L_{t}(g, h_{j}, h_{c}) =  L_{te}(f_c) + \lambda_{tvat} L_{tvat}(f_c) +\nonumber\\
\lambda_{jtc} L_{jtc}(h_{j}).
\end{align}
The proposed method minimizes Eq. \ref{loss_adv} and Eq. \ref{loss_all} in an alternating fashion.

\subsection{Connection to domain adaptation theory}
The work of \cite{ben2010theory} provides an upper bound on the target risk: $\ell_t(h,y) = E_{(x, y) \sim P^t} [|h(x)-y|]$ where $h$ is the classifier. One component in the upper bound is  a divergence term between two domain distributions. In UDA, we are interested in the difference of the measures between subsets of two domains on which a hypothesis in the finite-capacity hypothesis space $\mathcal H$ can commit errors. Instead of employing traditional metrics (e.g. the total variation distance), they use the $\mathcal H$-divergence. Given a domain $X$ with $P$ and $Q$ probability distributions over $X$, and $\mathcal H$ a hypothesis class on $X$, the $\mathcal H$-divergence is
\begin{align}
d_{\mathcal H \Delta \mathcal H}(P, Q) := 2 \sup_{h, h' \in \mathcal H} |Pr_{x\sim P}(h(x) \neq h'(x)) -\nonumber\\
Pr_{x\sim Q}(h(x) \neq h'(x))|.
\end{align} 

\begin{table*}
\begin{footnotesize}
\begin{center}
\begin{tabular}{||c | c | c | c | c | c||} 
\hline
Dataset & Number of training samples & Number of test samples & Number of classes & Resolution & Channels \\ [0.5ex]
\hline\hline
\hline
MNIST \cite{lecun1998gradient} & $60,000$ & $10,000$ & $10$ & $28 \times 28$ & Mono\\
\hline
SVHN \cite{netzer2011reading} & $73,257$ & $26,032$ & $10$ & $32 \times 32$ & RGB\\
\hline
CIFAR10 \cite{krizhevsky2009learning} & $50,000$ & $10,000$ & $10$ & $32 \times 32$ & RGB \\
\hline
STL  \cite{coates2011analysis}& $5,000$ & $8,000$ & $10$ & $96 \times 96$ & RGB \\
\hline
SYN-DIGITS \cite{ganin2014unsupervised} & $479,400$ & $9,553$ & $10$ &  $32 \times 32$ & RGB \\
\hline\hline
\end{tabular}
\caption{Specs of the datasets used in the experiments.}
\label{uda_datasets}
\end{center}
\end{footnotesize}
\end{table*}

Now, we can recall the main Theorem of \cite{ben2010theory}. \textit{Let $\mathcal H$ be an hypothesis space of VC dimension $d$. If $X^s$, $X^t$ are unlabeled samples of size $m'$ each, drawn from $P_x^s$ and $P_x^t$ respectively, then for any $\delta \in (0,1)$, with probability at least $1-\delta$ (over the choice of the samples), for every $h\in \mathcal H$: \begin{align}
\ell_t(h) \leq \ell_s(h) + \frac{1}{2} \hat d_{H \Delta H} (X^s, X^t) +\nonumber \\
4 \sqrt{\frac{2d \log(2m')+\log(\frac{2}{\delta})}{m'}} + \lambda
\end{align} where $\lambda=\ell_s(h^*)+\ell_t(h^*)$, $h^*=\argmin_{h\in \mathcal H} \ell_s(h)+\ell_t(h)$ and $\hat d_{H \Delta H} (X^s, X^t)$ is empirical $\mathcal H$ divergence.}  In words, the target risk is upper bounded by the source risk, empirical $\mathcal H$-divergence and combined risk of ideal joint hypothesis $\lambda$. 

If there is no classifier which can discriminate source samples from target samples then the empirical $\mathcal H$-divergence is zero from Lemma 2 of \cite{ben2010theory}. DANN of \cite{ganin2014unsupervised} minimizes $\hat d_{H \Delta H} (X^s, X^t)$ by matching the marginal distributions (i.e. by aligning marginal push-forwards $g\#P^s_x$ and $g\#P^t_x$). But, if the joint push-forward distributions ($g\#P^s$ and $g\#P^t$) are not matched accurately, there may not be a classifier in the hypothesis space with low risk in both domains. Hence, $\lambda$ has to be large for any hypothesis space $\mathcal H$.

Our proposed method tackles this problem, by making sure that the label-conditioned push-forwards are aligned disjointly. With disjoint alignment, we mean that no two samples with different labels can be assigned to the same feature point. Moreover, the third term in the upper bound decreases with the number of samples drawn from both domains. This number can increase with data augmentation. VAT has the same effect of augmenting the data with adversarially perturbed images where the small perturbations are nuisances for the task. 

\section{Empirical Evaluation}
\label{section_empirical}
\subsection{Implementation details}
We evaluate the proposed method on the standard digit and object image classification benchmarks in UDA. Namely, CIFAR $\rightarrow$ STL, STL $\rightarrow$ CIFAR, MNIST $\rightarrow$ SVHN, SVHN $\rightarrow$ MNIST, SYN-DIGITS $\rightarrow$ SVHN and MNIST $\rightarrow$ MNIST-M.  The first three settings are the most challenging ones where state-of-the-art (SOA) methods accuracies are still below $90\%$. Our method achieves SOA accuracy in all these tasks.

\textbf{CIFAR $\leftrightarrow$ STL.} Similar to CIFAR, STL images are acquired from labeled examples on ImageNet. However, images are $96\times 96$ instead of the $32\times 32$ images in CIFAR. All images are converted to $32\times 32$ RGB in pretraining. We down-sampled images by local averaging. Note that we only used the labeled part of STL in all the experiments. CIFAR and STL both have $10$ classes, $9$ of which are common for both datasets. Like previous works \cite{kumar2018co,french2018self} we removed the non-overlapping classes (class frog and class monkey) reducing the problem into a 9-class prediction. See Table \ref{uda_datasets} for the specs of the datasets.

\textbf{MNIST $\leftrightarrow$ SVHN.} We convert MNIST images to RGB images by repeating the gray image for each color channel and we resize them to $32\times 32$ by padding zeros. Following previous works \cite{kumar2018co,french2018self}, we used Instance Normalization (IN) for MNIST $\leftrightarrow$ SVHN, which is introduced by \cite{ulyanov2017improved} for image style transfer. We preprocess images both at training and test time with IN.  

\textbf{SYN-DIGITS $\rightarrow$ SVHN.} SYN-DIGITS \cite{ganin2014unsupervised} is a dataset of synthetic digits generated from Windows fonts by varying position, orientation and background. In each image, one, two or three digits exist. The degrees of variation were chosen to match SVHN.

\textbf{MNIST $\rightarrow$ MNIST-M.} MNIST-M \cite{ganin2014unsupervised} is a difference-blend of MNIST over patches randomly extracted from color photos from BSDS500 \cite{arbelaez2011contour}. I.e. $I_{ijk}^{out} = |I_{ijk}^1-I_{ijk}^2|$, where $i, j$ are the coordinates of a pixel and $k$ is a channel index. MNIST-M images are RGB and $28$ by $28$. MNIST images are replicated for each channel during preprocessing.

No data augmentation is used in any of the experiments to allow for a fair comparison with SOA methods \cite{kumar2018co,shu2018dirt}. Again, to allow fair comparison with the previous works \cite{french2018self,shu2018dirt}, we have not used sophisticated architectures like ResNet \cite{he2016deep}. Networks used in the experiments are given in the Supp. Mat. We report inference performance of the class-predictor.

We feed source and training samples into two different mini-batches at each iteration of training. As we are using the same batch layers for both source and target datasets, mean and variance learned -- to be used at inference time -- are the running average over both source and target data statistics. 

Office UDA experiments (Amazon$\rightarrow$Webcam,  Webcam$\rightarrow$DSLR, DSLR$\rightarrow$Webcam) were used as the standard benchmark in early UDA works \cite{long2016unsupervised,shen2017adversarial,li2016revisiting,tzeng2017adversarial}. However, recent SOA methods \cite{saito2017asymmetric,kumar2018co,shu2018dirt,french2018self} did not report on these datasets, as labels are noisy \cite{bousmalis2017unsupervised}. Moreover, this is a small dataset with $4,652$ images from $31$ classes necessitating the use of Imagenet-pretrained networks. Hence, we also choose not to report experiments on this dataset.

\subsection{Results}
\label{sect_results}
\begin{table*}[t]
\begin{center}
\begin{tabular}{||c | c | c | c | c | c | c ||} 
\hline
Source dataset & MNIST  & SVHN  & CIFAR  & STL   & SYN-DIGITS & MNIST\\ [0.5ex]
Target dataset & SVHN & MNIST & STL & CIFAR  & SVHN & MNIST-M\\ [0.5ex]
\hline\hline
\cite{ganin2014unsupervised} DANN* & $60.6$ & $68.3$ & $78.1$ &  $62.7$ & $90.1$ & $94.6$\\
\hline
\cite{ghifary2016deep} DRCN & $40.05$ & $82.0$ & $66.37$ &  $58.86$ & NR& NR\\
\hline
\cite{sener2016learning} kNN-Ad & $40.3$  & $78.8$ & NR &  NR & NR & $86.7$\\
\hline
\cite{saito2017asymmetric} ATT & $52.8$ & $86.2$ & NR &  NR & $92.9$ & $94.2$\\
\hline
\cite{french2018self} $\Pi$-model** & $33.87$ & $93.33$ & $77.53$ & $71.65$& $96.01$& NR\\
\hline
\cite{shu2018dirt} VADA & $47.5$ & $97.9$ & $80.0$ & $73.5$ & $94.8$& $97.7$\\  
\hline
\cite{shu2018dirt} DIRT-T & $54.5$ & $99.4$ & NR & $75.3$  & $96.1$& $98.9$\\
\hline
\cite{shu2018dirt} VADA + IN & $73.3$  & $94.5$ & $78.3$ & $71.4$   & $94.9$& $95.7$\\  
\hline
\cite{shu2018dirt} DIRT-T +IN & $76.5$ & $99.4$ & NR & $73.3$ & $96.2$& $98.7$\\
\hline
\cite{kumar2018co} Co-DA & $81.7$ & $99.0$& $81.4$ & $76.4$ & $96.4$& $99.0$\\
\hline
\cite{kumar2018co} Co-DA + DIRT-T & $88.0$ & $\boldsymbol{99.4}$ & NR & $77.6$ & $\boldsymbol{96.4}$ & $99.1$\\
\hline\hline
Ours & $\boldsymbol{89.19}$ & $99.33$ & $\boldsymbol{81.65}$ & $\boldsymbol{77.76}$ & $96.22$& $\boldsymbol{99.47}$\\
\hline\hline
Source-only (baseline) & $44.21$ & $70.58$ & $79.41$ & $65.44$ & $85.83$ & $70.28$\\
\hline
Target-only & $94.82$ & $99.28$ & $77.02$  & $92.04$ & $96.56$ & $99.87$\\
\hline
\end{tabular}
\caption{{\bf Comparison to SOA UDA algorithms on the UDA image classification tasks.} Accuracies on the target test data are reported. Algorithms are trained on entire labeled source training data and unlabeled target training data. NR stands for not reported. * DANN results are implementation of \cite{shu2018dirt} with instance normalized input. ** Results of \cite{french2018self} with minimal augmentations are reported. The proposed method achieves the best or second highest score after Co-DA. The proposed method can be combined with Co-DA, but we report the naked results to illustrate the effectiveness of the idea.}
\label{soa_uda}
\end{center}
\end{table*}

We report the performance of the proposed method in Table \ref{soa_uda}. In all the experiments, the proposed method achieves the best or the second best results after Co-DA \cite{kumar2018co}. Especially in the most challenging tasks, for which SOA accuracies are below $90\%$, our method outperforms all the previous methods. Numbers reported in the corresponding papers are used except DANN for which reported scores from \cite{shu2018dirt} are used.

Works we compare to include \cite{ghifary2016deep} which proposed Deep Reconstruction Classification network (DRCN). The cross-entropy loss on the source data and reconstruction loss on the target data are minimized. \cite{shu2018dirt} applied the SSL method VAT to UDA which they call VADA (Virtual Adversarial Domain Adaptation). After training with domain-adversarial loss of \cite{ganin2014unsupervised} and VAT, they further fine-tune only on the target data with the entropy and VAT objectives. \cite{kumar2018co} suggested having two hypotheses in a way that they learn diverse feature embeddings while class predictions are encouraged to be consistent. They build this method on VADA of \cite{shu2018dirt}. The proposed method can be further improved by combining with Co-DA even though we ignore it to highlight the effectiveness of the clean method. Compared to Co-DA, our method has the memory and computational time advantage of not training multiple encoders. \cite{saito2017asymmetric} introduced ATT where two networks are trained on the source data and predictions of the networks are used as pseudo labels on the target data. Another network is trained on the target data with pseudo labels. A pseudo label is assigned if two networks agree and at least one of them is confident. 

Source-only models are also reported as baselines. These models are trained without exploiting the target training data in standard supervised learning setting using the same learning procedure (e.g. network, number of iterations etc.) as UDA methods. Since CIFAR has a large labeled set ($45000$ after removing samples of class frog), CIFAR $\rightarrow$ STL has a high accuracy even without exploiting the unlabeled data. Still, the proposed method outperforms the source-only baseline by $2.24 \%$. The target-only models are trained only on the target domain with class labels revealed. The target-only performance is considered as the empirical upper bound in some papers, but it is not necessarily the case, as seen in the CIFAR $\rightarrow$ STL setting where the target data is scarce; thus the target-only model is even worse than the source-only model. 

The advantage of the proposed method is more apparent in the converse direction STL $\rightarrow$ CIFAR where the accuracy increases from $65.44 \%$ to $77.76 \%$. STL contains a very small ($4500$ after removing samples of class monkey) labeled training set. That is why DIRT-T which fine-tunes on the target data, gave unreliable results for CIFAR $\rightarrow$ STL so they only report VADA result.

The source-only baseline has its lowest score in the MNIST $\rightarrow$ SVHN setting. This is a challenging task as MNIST is greyscale, in contrast to color digits in SVHN. Moreover, SVHN contains multiple digits within an image while MNIST pictures contain single, centered digits. SVHN $\rightarrow$ MNIST is a much simpler experimental setting where SOA accuracies are above $99\%$. We achieve SOA in MNIST $\rightarrow$ SVHN while being second best in SVHN $\rightarrow$ MNIST after Co-DA. Note that our accuracy in SVHN $\rightarrow$ MNIST is $99.33\%$. MNIST $\rightarrow$ MNIST-M and SYN-DIGITS $\rightarrow$ SVHN are other saturated tasks where our method beats SOA in the former one while being second best in the latter. At these levels of saturation of the dataset, top-rated performance is not as informative. 

In MNIST $\rightarrow$ SVHN, our method ($89.19\%$) is substantially better than VADA+IN ($73.3\%$) which also uses input smoothing but with DANN (marginal alignment). Similary, in STL $\rightarrow$ CIFAR, VADA achieves $73.5\%$ while our method is SOA with $77.76\%$ accuracy. This shows the effectiveness of our joint-alignment method. 
 
\begin{figure}
\includegraphics[width=9cm]{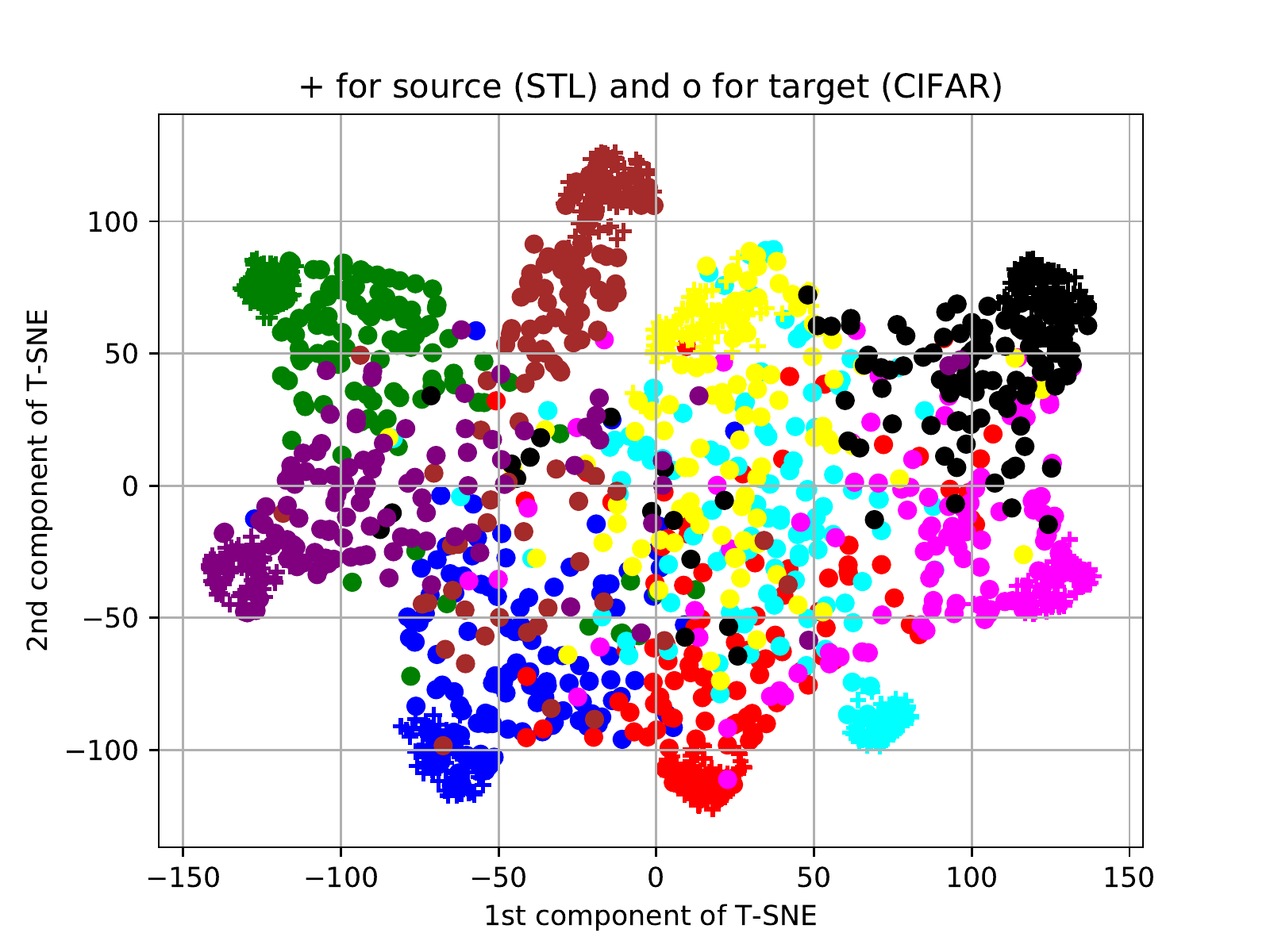}
\includegraphics[width=9cm]{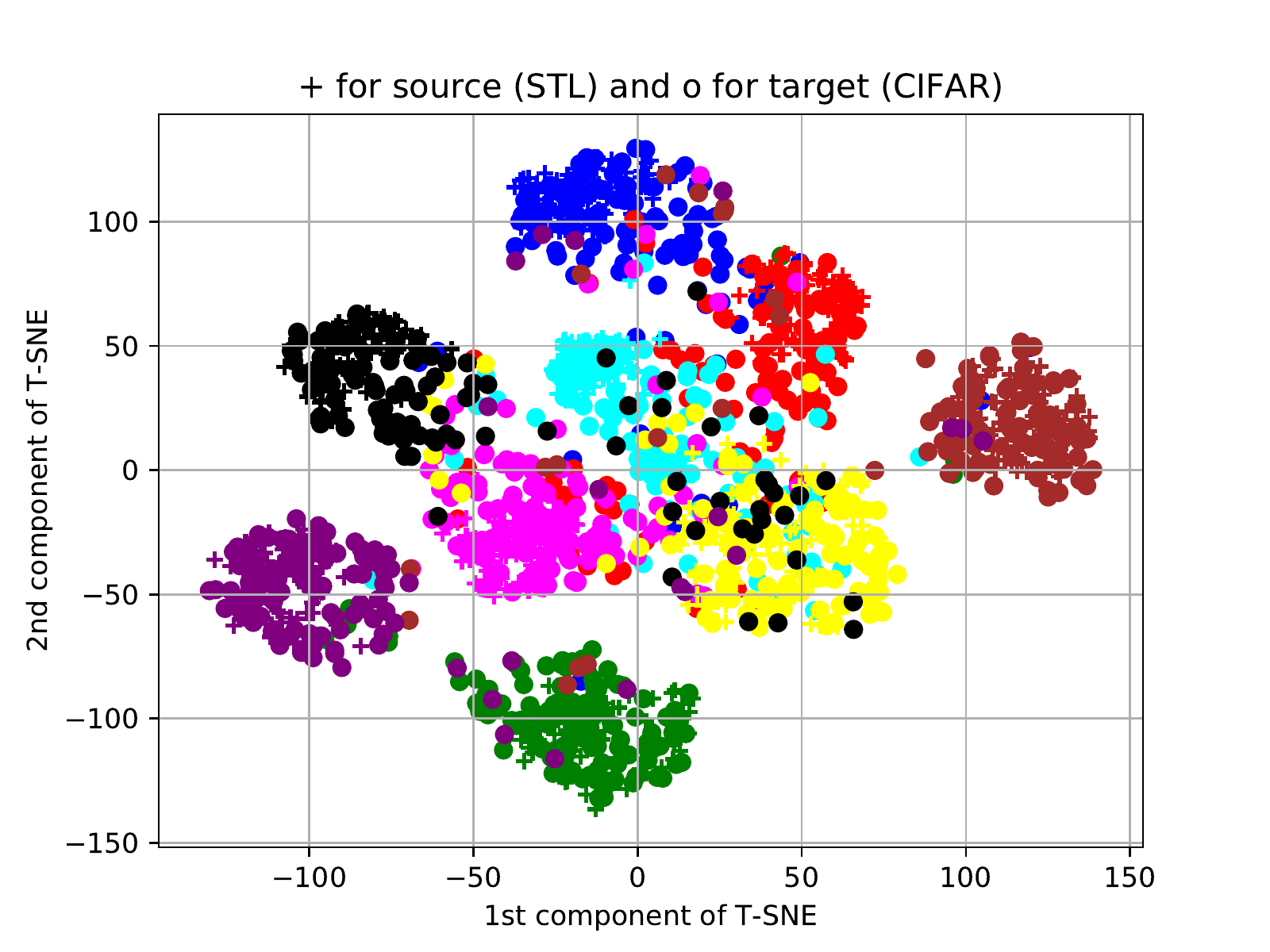}
\caption{{\bf t-SNE plots for STL $\rightarrow$ CIFAR.} t-SNE plots of the source-only trained (top panel) and the proposed method model (bottom panel). Encoder outputs are projected to two-dimensional space with t-SNE. Samples corresponding to the same class are visualized with the same color. The symbol ``+'' is used for the source samples and ``o'' is for the target samples. Best viewed in color.} 
\label{figure_tsne_stl_cifar}
\end{figure}

To demonstrate the effectiveness of the proposed approach in aligning the samples of the same class, we visualize the t-Distributed Stochastic Neighbor Embedding (t-SNE) \cite{maaten2008visualizing} of the source-only baseline and the proposed approach in Fig. \ref{figure_tsne_stl_cifar}. t-SNE is performed on the encoder output for $1000$ randomly drawn samples from both source and target domains for STL $\rightarrow$ CIFAR setting. As one can see, samples of the same classes are better aligned for the proposed approach compared to the source-only method. 

\section{Analysis}
\label{section_analysis}

The main result of our analysis is that the objective introduced in Sect. \ref{section_loss} is minimized only for matching conditional push-forwards given the optimal joint predictor (Theorem \ref{optimal_generated_dist}). For that, we first find the optimal joint predictor in Proposition \ref{optimal_classifier}. We operate  under the supervised setting, assuming the target labels are revealed. So, we replace $\hat y$ in the objective functions with ground-truth labels $y$ for the target samples. Proofs follow similar steps to Proposition 1 and Theorem 1 in \cite{goodfellow2014generative}. 

\begin{prop}
\label{optimal_classifier}
The optimal joint predictor $h_j$ minimizing $L_{jsc}(h_{j}) + L_{jtc}(h_{j})$ given in the Eq. \ref{loss_jsc},\ref{loss_jtc} for any feature $z$ with non-zero measure either on $g\#P_x^s(z)$ or $g\#P_x^t(z)$ is
\begin{align}
h_j(z)[i] = \frac{g\#P_x^s(z, y=e_i)}{g\#P_x^s(z) + g\#P_x^t(z)} \nonumber 
\end{align}
\begin{align}
h_j(z)[i+K] = \frac{g\#P_x^t(z, y=e_i)}{g\#P_x^s(z) + g\#P_x^t(z)} \text{ for } i \in \{1,...,K \} \nonumber
\end{align}
\end{prop}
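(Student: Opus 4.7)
The plan is to follow the template of Goodfellow et al.'s Proposition 1: turn the two cross-entropy expectations into a single integral over feature space with respect to the push-forward measures, and then minimize pointwise in $z$ subject to the simplex constraint $\sum_{j=1}^{2K} h_j(z)[j] = 1$.

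First, I would unpack the one-hot cross-entropy. Because $[y,\mathbf{0}]$ has its mass on coordinate $i\in\{1,\dots,K\}$ when $y=e_i$, and $[\mathbf{0},\hat y] = [\mathbf{0}, y]$ (supervised setting) puts its mass on coordinate $i+K$, we have $\ell_{CE}(h_j(g(x)),[y,\mathbf{0}]) = -\sum_{i=1}^K y[i]\log h_j(g(x))[i]$ and similarly $\ell_{CE}(h_j(g(x)),[\mathbf{0},y]) = -\sum_{i=1}^K y[i]\log h_j(g(x))[i+K]$. Taking expectations and changing variables to $z = g(x)$ via the push-forward, the sum of the two losses becomes
\begin{align*}
L_{jsc}(h_j)+L_{jtc}(h_j) = -\int_z \sum_{i=1}^K \Bigl[ & g\#P_x^s(z,y=e_i)\log h_j(z)[i] \\
 + {} & g\#P_x^t(z,y=e_i)\log h_j(z)[i+K]\Bigr]\,dz.
\end{align*}
Here I am using the joint push-forward densities whose marginals in $z$ are $g\#P_x^s$ and $g\#P_x^t$; at any $z$ with zero measure under both, $h_j(z)$ is unconstrained (the claim excludes such points).

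Next, since the integrand depends on $h_j$ only through the value $h_j(z)\in\Delta_{2K}$, the infimum over measurable $h_j$ is attained by minimizing the integrand pointwise. Fix $z$ and write $\alpha_i = g\#P_x^s(z,y=e_i)$, $\beta_i = g\#P_x^t(z,y=e_i)$, $p_j = h_j(z)[j]$. The pointwise problem is
\begin{equation*}
\min_{p\in\Delta_{2K}} \; -\sum_{i=1}^K \alpha_i \log p_i - \sum_{i=1}^K \beta_i \log p_{i+K}.
\end{equation*}
A Lagrange multiplier for the constraint $\sum_j p_j = 1$ gives $p_i = \alpha_i/\lambda$ and $p_{i+K}=\beta_i/\lambda$ with $\lambda = \sum_i \alpha_i + \sum_i \beta_i = g\#P_x^s(z) + g\#P_x^t(z)$; strict convexity of $-\log$ on $(0,\infty)$ (Gibbs' inequality) makes this the unique minimizer on the relative interior, and a standard boundary check rules out the trivial $p_j=0$ solutions when the corresponding numerator is positive. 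Substituting back yields exactly the formulas in the statement.

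The only mildly subtle step is the change of variables to the push-forward, which I would justify by the definition $E_{x\sim P_x^s}[\varphi(g(x))] = \int \varphi(z)\,d(g\#P_x^s)(z)$, applied to the joint $(g(x),y)$ rather than to $g(x)$ alone; everything else is a two-line Lagrange calculation. I expect no real obstacle beyond keeping the bookkeeping of the $2K$ coordinates and the two source/target push-forwards clean.
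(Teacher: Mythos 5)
Your proposal is correct and follows essentially the same route as the paper: rewrite both cross-entropy expectations as a single integral over the feature space against the joint push-forward densities, then minimize pointwise in $z$ over the $2K$-simplex via a Lagrange multiplier, which yields $\lambda = g\#P_x^s(z)+g\#P_x^t(z)$ and the stated formulas. The only cosmetic difference is that the paper factors the Lagrange step into a separate convexity lemma and handles the $\alpha_i=0$ coordinates by the convention $0\log\theta=0$, whereas you do the calculation inline with a boundary check; the substance is identical.
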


\begin{theorem}
The objective $L_{jsa}(g) + L_{jta}(g)$ given in the Eq. \ref{loss_jsa}-\ref{loss_lcta} is minimized for the given optimal joint predictor if only if $g\#P_x^s(z|y=e_k)=g\#P_x^t(z|y=e_k)$ and $g\#P_x^s(z|y=e_k)>0 \Rightarrow g\#P_x^s(z|y=e_i)=0$ for $i \neq k$ for any $y=e_k$ and $z$. 
\label{optimal_generated_dist}
\end{theorem}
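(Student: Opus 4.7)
The plan is to mirror the Goodfellow-style global-optimum argument used for GANs: substitute the optimal discriminator and reduce the objective to a pointwise, Jensen--Shannon-type inequality. To set notation I abbreviate $p_s^k(z) := g\#P^s_x(z, y=e_k)$, $p_t^k(z) := g\#P^t_x(z, y=e_k)$, and $S(z) := g\#P^s_x(z) + g\#P^t_x(z) = \sum_k(p_s^k(z)+p_t^k(z))$. Plugging $h_j^*$ from Proposition~\ref{optimal_classifier} into the losses, the swapped targets $[\mathbf 0, y]$ and $[\hat y, \mathbf 0]$ single out the cross-domain entries of $h_j^*$, giving
\begin{equation*}
L_{jsa}(g)+L_{jta}(g) = \int \sum_k \Bigl[ p_s^k(z)\log\frac{S(z)}{p_t^k(z)} + p_t^k(z)\log\frac{S(z)}{p_s^k(z)} \Bigr] dz.
\end{equation*}

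The heart of the argument is a change of variables. I introduce the class-occupancy vector $q_k(z) := (p_s^k(z)+p_t^k(z))/S(z)$ (summing to one in $k$) and the within-class source fraction $\rho_k(z) := p_s^k(z)/(p_s^k(z)+p_t^k(z))$; a direct algebraic rearrangement rewrites the integrand as $S(z)\bigl(\sum_k q_k(z) f(\rho_k(z)) + H(q(z))\bigr)$, where $f(\rho) := -\rho\log(1-\rho)-(1-\rho)\log\rho$ and $H$ is Shannon entropy. A quick computation of $f''$ shows $f$ is strictly convex on $(0,1)$ with unique minimizer $f(1/2)=\log 2$, and $H(q)\geq 0$ with equality iff $q$ is a Dirac. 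Since $\int S(z)\,dz = 2$, this delivers the global lower bound $L_{jsa}(g)+L_{jta}(g)\geq 2\log 2$.

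For the only-if direction I would unpack the two equality conditions. Requiring $H(q(z))=0$ at every $z$ with $S(z)>0$ forces a single class $k^*(z)$ to carry all the mass, so $p_s^i(z)=p_t^i(z)=0$ for $i\neq k^*(z)$, which is the disjointness claim $g\#P^s_x(z|y=e_k)>0 \Rightarrow g\#P^s_x(z|y=e_i)=0$ for $i\neq k$. Requiring $\rho_{k^*}(z)=1/2$ yields the joint-density equality $p_s^{k^*}(z)=p_t^{k^*}(z)$; under the matched class-prior assumption $P^s(y=e_k)=P^t(y=e_k)$ standard in UDA, this is equivalent to $g\#P^s_x(z|y=e_k)=g\#P^t_x(z|y=e_k)$. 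The if direction is then immediate: substituting these two conditions into the formula for the integrand yields $\log 2\cdot S(z)$ at every $z$ with $S(z)>0$, so the bound $2\log 2$ is attained. The main obstacle I anticipate is the measure-theoretic bookkeeping at points where $p_s^k$ or $p_t^k$ vanishes: there $\rho_k$ and $\log 0$ are undefined, and I would discharge these cases with the convention $0\log 0 = 0$ and by restricting the pointwise analysis to $\{z:S(z)>0\}$, which carries the full mass of both losses.
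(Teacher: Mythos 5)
Your proof is correct, and it reaches the paper's two equality conditions by what is, at heart, the same decomposition the paper uses, but executed with a different key inequality. Your two terms are exactly the paper's split of the substituted objective into $L_1$ (the within-class source/target balance) and $L_2$ (the cross-class leakage term): in your coordinates $L_1=\int S(z)\sum_k q_k(z) f(\rho_k(z))\,dz$ and $L_2=\int S(z) H(q(z))\,dz$, so the change of variables to $(q,\rho)$ is a reparametrization of the paper's decomposition rather than a new one. Where you genuinely diverge is in how the $L_1$-type term is minimized: the paper proves a standalone Lemma (Lemma \ref{lemma_pq}) by rewriting the integrand in terms of $\beta=P(x)/Q(x)$ and applying Jensen's inequality at the integral level to the convex function $\phi(\beta)=\log(1+\beta)(1+\beta)-\log\beta-\log 2(1+\beta)$, whereas you use the pointwise bound $f(\rho)\geq f(1/2)=\log 2$ from strict convexity of a single-variable function. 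Your pointwise route is more elementary and, notably, more robust: the paper's Lemma \ref{lemma_pq} is stated for probability distributions (its Jensen step uses $\int P=\int Q=1$) but is then applied to the sub-probability joint densities $g\#P_x^s(z,y=e_k)$, which only integrate to the class priors; your bound $f(\rho_k(z))\geq\log 2$ holds pointwise regardless of normalization, so the aggregate bound $\int S(z)\log 2\,dz=2\log 2$ goes through without that gap. You also make explicit the matched-class-prior assumption needed to translate equality of the joint push-forwards $g\#P_x^s(z,y=e_k)=g\#P_x^t(z,y=e_k)$ into the conditional statement $g\#P_x^s(z|y=e_k)=g\#P_x^t(z|y=e_k)$ appearing in the theorem, a step the paper's proof passes over silently. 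The degenerate cases ($q_k=0$, or $\rho_k\in\{0,1\}$ where the integrand is $+\infty$) are handled adequately by your stated conventions.
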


Theorem \ref{optimal_generated_dist} states that no two samples with different labels can be assigned to the same feature point for the encoder to minimize its loss given the optimal joint predictor. Moreover, the measure assigned to each feature is same for the source and the target push-forward distributions to maximally fool the optimal joint predictor. 

This result indicates that the global minimum of the proposed objective function is achieved when conditional feature distributions are aligned. But, this analysis does not necessarily give a guarantee that the converged solution is optimal in practice as we do not have access to the target labels in UDA. But, we demonstrated empirically in Fig. \ref{figure_tsne_stl_cifar} that with reasonably good pseudo-labels provided by a separate class predictor, the objective gives better alignment than the source-only model. 

The second issue is that finding the optimal predictor or generator with finite samples may not be possible, as optimal solutions are derived as functions of true measures instead of the network parameters trained on finite samples. Lastly, the joint predictor is not trained until convergence; instead, a gradient step is taken in alternating fashion for computational efficiency. So, the predictor is also not necessarily optimal in practice. Even though there are still gaps to be filled between this theory and practice, this analysis shows us that the proposed objective function is doing a sensible job given pseudo-labels for the target data are reasonably good.

\section{Discussion and Related Work}
\label{section_discussion}
In this section, we will summarize the most relevant works from the UDA literature. For more in-depth coverage of the literature see the recent survey of \cite{wang2018deep} on deep domain adaptation for various vision tasks. Many of the domain adaptation works can be categorized into two: (1) the ones learning a shared feature space (symmetric feature based) and the ones transferring features of one domain to another (asymmetric feature based). 

\textbf{Shared feature (symmetric feature based).} Feature transferability drops in the higher layers of a network and there may not exist an optimal classifier for both the source and the target data. Hence many works use two separate classifiers for the the source and the target domains while the encoder parameters are shared. In these works, the source classifier is trained with the labeled source data and the target classifier is regularized by minimizing a distance metric between the source classifier using all the data. 

One common such metric is the (Maximum Mean Discrepancy) MMD which is a measurement of the divergence between two probability distributions from their samples by computing the distance of mean embeddings: $|| \frac{1}{N^s} \sum_{i=1}^{N^s} g(x_i^s) - \frac{1}{N^t} \sum_{i=1}^{N^t} g(x_i^t)||$. DDC of \cite{tzeng2014deep} applies MMD to the last layer while Deep Adaptation Network (DAN) of \cite{long2015learning} applies to the last 3 FC layers. CoGAN of \cite{liu2016coupled} shares early layer parameters of the generator and later layer parameters of the discriminators instead of minimizing the MMD. \cite{long2016unsupervised} models target classifier predictions as the sum of source classifier predictions and a learned residual function. Central Moment Discrepancy (CMD) of \cite{zellinger2017central} extends MMD by matching higher moment statistics of the source and the target features. 

Adversarial domain adaption methods described in the early sections \cite{ganin2014unsupervised} are another way of learning a shared feature space without needing separate classifiers for the source and the target data. DANN \cite{ganin2014unsupervised} proposed a shared encoder and two discriminator branches for domain and class predictions. This makes \textit{marginal} feature distributions similar for the domain classifier.  Upcoming works \cite{shu2018dirt,kumar2018co} applied the same idea but instead of multiplying the gradient with a negative value, they optimize the discriminator and generator losses in an alternating fashion. \cite{shen2017adversarial} suggested replacing the domain discrepancy loss with the Wasserstein distance to tackle gradient vanishing problems. The work of \cite{long2018conditional} resembles ours where they also condition the domain alignment loss to labels. Unlike us, their domain discriminator takes the outer product of the features and the class predictions as input. Similarly, \cite{chen2017no} applies conditional domain alignment using $K$ different class-conditioned binary predictors instead of one predictor with $2K$-way adversarial loss. Our approach allows to not only align the conditional push-forward distributions, but also encourage them to be disjoint. If our sole goal was to align the conditional distributions, a constant encoder function would be a trivial solution. Furthermore, these methods do not exploit SSL regularizers like VAT.

\textbf{Multiple hypotheses.} Another line of work trains multiple encoders and/or classifiers with some consistency loss connecting them. Other than aforementioned methods of \cite{saito2017asymmetric,kumar2018co}, \cite{bousmalis2016domain} proposed domain separation network (DSN). They have two private encoders and a shared encoder for the source and the target samples. The classifier is trained with the summed representations of the shared and the private features. Similarly, \cite{tzeng2017adversarial} trained two encoders for the source and the target data. At test time, they use the encoder learned for the target data and the classifier trained with the source data. \cite{saito2018maximum} had one encoder and two classifiers. Both classifiers are trained on the labeled source samples. The distance between predictions of two classifiers on the same target sample is minimized by the encoder and maximized by classifiers. With the adversarial training of the encoder, they make sure that no two classifiers can have different predictions on the same target sample. Our model also has two predictors but unlike these methods, the purpose of the second predictor (the joint predictor) is to provide conditional alignment for the encoder.

\textbf{Mapping representations (asymmetric feature based).} These methods apply a transformation from the source domain to the target domain or vice-versa \cite{bousmalis2017unsupervised}. Adaptive Batch Normalization (Ad-aBN) of \cite{li2016revisiting} proposed to map domain representations with first-order statistics. Before inference time, they pass all target samples through the network to learn the mean and the variance for each activation and apply these learned statistics to normalize the test instances. \cite{sun2016return} proposed Correlation Alignment (CORAL). They match the second order statistics of the source data to target by recoloring whitened source data with target statistics. 

\textbf{Reconstruction as an auxiliary task.} Another line of work uses reconstruction as an auxiliary task for UDA as in \cite{ghifary2016deep,lee2018diverse}. \cite{zhu2017unpaired} tackles image to image translation (I2I) when there are no paired images in training data. Along with standard GAN losses, they introduced the cycle loss where generators minimize the reconstruction loss. \cite{russo2017source} proposed to modify the consistency loss so that the label of the reconstructed image is preserved, instead of the image itself. \cite{murez2018image} combined several of these reconstruction losses. We have not employed a reconstruction loss as our main focus is domain alignment, not image transfer. 

\textbf{Exploiting unlabeled data with SSL regularizers.} Given the features of the source and the target domains are aligned, standard SSL methods can be applied. \cite{french2018self} employed the Mean Teacher \cite{tarvainen2017mean} for UDA where the consistency loss on the target data between student and teacher networks is minimized. Even with extra tricks like confidence-thresholding and some data augmentation, the accuracy they achieved for MNIST$\rightarrow$SVHN was $34\%$. This shows that, especially when domain discrepancy is high, SSL regularizers are not sufficient without first reducing the discrepancy. 

\textbf{Conditional GAN.} \cite{mirza2014conditional} proposed conditional GAN where generation and discrimination are conditioned onto labels by inputting labels. \cite{odena2017conditional}, instead, augmented the discriminator with an auxiliary task of predicting the class labels. The generator also generates samples respecting the correct class label. Our approach differs from these works as we are not generating fake sample in the input space.

\textbf{Segmentation.} Several works have applied ideas from UDA to semantic segmentation. \cite{zhang2017curriculum} followed the curriculum learning approach, and learn image labels, superpixel labels, and pixel labels in order. \cite{vu2018advent} minimized entropy on the target data in addition to adversarial feature adaptation. \cite{romera2018train} exploited geometric and texture augmentations for domain adaptation. \cite{hoffman2016fcns} was the first one to apply category-specific alignments in the form of lower and upper-bound constraints, but this does not guarantee alignment of the conditional push-forwards. \cite{tsai2018learning} applied the domain adversarial loss on both the features and segmentation outputs. 

\section{Conclusion} 
We proposed a novel method for UDA with the motivation of conditionally aligning the features. We achieved this goal by introducing an additional joint predictor which learns a distribution over class and domain labels. The encoder is trained to fool this predictor within the same-class samples of each domain. We also employed recent tools from SSL to improve the generalization. The proposed idea achieved state-of-the-art accuracy in most challenging image classification tasks for which accuracy are still below $90\%$. The code will be made available after the review process. Implementation details and proofs are provided in the Supp. Mat. 

{\small
\bibliographystyle{ieee}
\bibliography{ref}
}

\onecolumn

\section{Supplementary Material}
\maketitle

In Section \ref{sect_analysis}, we provide proofs for the results given in Section 4 of the main paper. In Section \ref{sect_implementation}, we describe the implementation details. In Section \ref{sect_ablations}, we report and discuss the performance of the proposed method when one or more components in the loss are removed. 

\subsection{Analysis}
\label{sect_analysis}
\setcounter{prop}{0}
\setcounter{defn}{0}
\setcounter{theorem}{0}

First, we prove a simple lemma that will be handy in the proof of Proposition 1. 
\begin{lemma}
\label{lemma_convex}
\begin{align}
\theta^* = \argmin_{\theta} \sum_{i=1}^K - \alpha[i] \log(\theta[i]) \text{ s.t. } 1 \geq \theta[i] \geq 0,\, \sum_{i=1}^K \theta[i] = 1,\, \alpha[i] > 0, \text{ for all } i. \text{ Then, } \theta^*[k] =  \frac{\alpha[k]}{\sum_{i=1}^K \alpha[i]} \text{ for any } k.\nonumber
\end{align}
\end{lemma}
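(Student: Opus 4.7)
The plan is to recognize this as a standard minimization of a relative-entropy-like functional on the probability simplex, with two natural approaches: (a) Lagrange multipliers on the equality constraint $\sum_i \theta[i]=1$, or (b) a direct argument via Gibbs' inequality (nonnegativity of KL divergence). Either route is routine; the Gibbs-inequality route avoids any need to invoke KKT theory or to check second-order conditions, so I would use it.

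First, I would dispense with the inequality constraints $\theta[i]\ge 0$. Since every $\alpha[i]>0$ and $-\alpha[i]\log\theta[i]\to+\infty$ as $\theta[i]\to 0^+$, while the objective is finite at any interior point of the simplex (e.g.\ $\theta[i]=1/K$), any minimizer must be strictly positive. Hence these constraints are inactive and the problem reduces to minimization over the relative interior of the simplex subject only to $\sum_i \theta[i]=1$.

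Next, I would set $S:=\sum_{j=1}^K \alpha[j]>0$ and define the probability vector $p[i]:=\alpha[i]/S$. Rewriting the objective,
\begin{align}
\sum_{i=1}^K -\alpha[i]\log\theta[i]
= -S\sum_{i=1}^K p[i]\log\theta[i]
= S\sum_{i=1}^K p[i]\log\frac{p[i]}{\theta[i]} - S\sum_{i=1}^K p[i]\log p[i].\nonumber
\end{align}
The second term is a constant (the Shannon entropy of $p$ scaled by $S$) and does not depend on $\theta$. The first term is $S\cdot \mathrm{KL}(p\Vert\theta)\ge 0$, with equality if and only if $\theta=p$ (Gibbs' inequality). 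Therefore the unique minimizer on the simplex is $\theta^*[k]=p[k]=\alpha[k]/\sum_{j}\alpha[j]$, which is the claimed formula.

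There is essentially no hard step here; the only subtlety worth double-checking is that the Gibbs-inequality equality case really gives the unique minimum (which follows from strict convexity of $-\log$ on $(0,\infty)$, hence strict convexity of the original objective). As a sanity check, I would also verify the answer via Lagrange multipliers: setting $-\alpha[k]/\theta[k]+\lambda=0$ yields $\theta[k]=\alpha[k]/\lambda$, and imposing $\sum_k \theta[k]=1$ forces $\lambda=S$, recovering $\theta^*[k]=\alpha[k]/S$.
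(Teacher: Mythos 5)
Your proof is correct, but it takes a genuinely different route from the paper's. The paper proceeds via the Lagrangian: it writes $L(\theta,\lambda)=\sum_i -\alpha[i]\log\theta[i]+\lambda(\sum_i\theta[i]-1)$, solves the stationarity condition to get $\theta[i]=\alpha[i]/\lambda$, appeals to convexity of $-\log$ to conclude the stationary point is the global minimum, and then determines $\lambda=\sum_i\alpha[i]$ from the dual; the inequality constraints are dismissed with the remark that $\log\theta[i]$ forces $\theta[i]\ge 0$ and the sum constraint forces $\theta[i]\le 1$. You instead normalize $\alpha$ into a probability vector $p$, decompose the objective as $S\cdot\mathrm{KL}(p\Vert\theta)$ plus a constant, and invoke Gibbs' inequality. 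What your route buys is a cleaner treatment of uniqueness (the equality case of Gibbs, backed by strict convexity) and an explicit argument that the boundary constraints are inactive (the objective diverges as $\theta[i]\to 0^+$ since $\alpha[i]>0$), both of which the paper handles only informally; it also avoids any appeal to duality or KKT machinery. The paper's route is more elementary in the sense that it requires no information-theoretic identity, just calculus, and it generalizes mechanically to variants of the objective. Your closing Lagrange-multiplier sanity check is essentially the paper's entire proof, so the two arguments are mutually consistent. One minor point of divergence in scope: the paper's proof ends by noting that the conclusion survives when some $\alpha[i]=0$ under the convention $0\cdot\log\theta[i]=0$, a degenerate case it needs for Proposition 1; your argument as written assumes all $\alpha[i]>0$ (as the lemma statement does), so if you wanted to match the paper's later use you would add the same remark.
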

\begin{proof}  
Let us write the Lagrangian form excluding inequality constraints, $L(\theta, \lambda) = \sum_{i=1}^K - \alpha[i] \log(\theta[i]) + \lambda (\sum_{i=1}^K \theta[i] - 1)$. $\nabla_{\theta[i]} L(\theta, \lambda) = -\frac{\alpha[i]}{\theta[i]} + \lambda = 0$ and $\theta[i] = \frac{\alpha[i]}{\lambda}$ for all $i$. $-\log$ is convex hence sum of them also convex and the stationary point is global minima. Then, the dual form becomes $g(\lambda) = \sum_{i=1}^K -\alpha[i] \log(\frac{\alpha[i]}{\lambda}) + \lambda (\sum_{i=1}^K \frac{\alpha[i]}{\lambda} - 1)$ then $\nabla_{\lambda} g(\lambda) = 0$ when $\lambda=\sum_{i=1}^K \alpha[i]$ and $\theta[k]=\frac{\alpha[k]}{\sum_{i=1}^K \alpha[i]}$. Note that the constraint $1 \geq \theta[i] \geq 0$ does not constrain the solution space as $\theta[i]$ has to be non-negative for $\log(\theta[i])$ to be defined and $\sum_{i=1}^K \theta[i] = 1$ enforces $1 \geq \theta[i]$. 
\end{proof}

\begin{prop}
\label{optimal_classifier}
The optimal joint predictor $h_j$ minimizing $L_{jsc}(h_{j}) + L_{jtc}(h_{j})$ given in the Eq. 6,7 for any feature $z$ with non-zero measure either on $g\#P_x^s(z)$ or $g\#P_x^t(z)$ is
\begin{align}
h_j(z)[i] = \frac{g\#P_x^s(z, y=e_i)}{g\#P_x^s(z) + g\#P_x^t(z)} \text{ and } 
h_j(z)[i+K] = \frac{g\#P_x^t(z, y=e_i)}{g\#P_x^s(z) + g\#P_x^t(z)} \text{ for } i \in \{1,...,K \} \nonumber 
\end{align}
\end{prop}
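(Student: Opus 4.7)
My plan is to reduce the minimization of $L_{jsc}(h_j)+L_{jtc}(h_j)$ to a pointwise minimization on the feature space $\mathcal{Z}$ and then invoke Lemma \ref{lemma_convex} at each $z$.

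First I would unfold the cross-entropy against the one-hot targets $[y,\mathbf{0}]$ and $[\mathbf{0},y]$, so that only one log-term survives in each summand. Concretely, for a source point $(x,e_k)$, $\ell_{CE}(h_j(g(x)),[e_k,\mathbf{0}])=-\log h_j(g(x))[k]$, and for a target point $(x,e_k)$, $\ell_{CE}(h_j(g(x)),[\mathbf{0},e_k])=-\log h_j(g(x))[k+K]$. Then I would push the expectations from input space to feature space via the encoder $g$, writing
\begin{align*}
L_{jsc}(h_j)&=-\sum_{i=1}^{K}\int_{\mathcal{Z}} g\#P_x^s(z,y=e_i)\,\log h_j(z)[i]\,dz,\\
L_{jtc}(h_j)&=-\sum_{i=1}^{K}\int_{\mathcal{Z}} g\#P_x^t(z,y=e_i)\,\log h_j(z)[i+K]\,dz.
\end{align*}

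Next I would add the two and interchange sum and integral so that the total loss takes the form $\int_{\mathcal{Z}} F(z,h_j(z))\,dz$, where $F(z,\theta)=-\sum_{i=1}^{K}g\#P_x^s(z,y=e_i)\log\theta[i]-\sum_{i=1}^{K}g\#P_x^t(z,y=e_i)\log\theta[i+K]$ and $\theta\in\mathbb{R}^{2K}$ lies on the probability simplex (since $h_j$ is a softmax output). Because the constraint is pointwise in $z$, the minimizer is obtained by minimizing $F(z,\cdot)$ separately at each $z$.

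At any $z$ with $g\#P_x^s(z)+g\#P_x^t(z)>0$, this is exactly the setting of Lemma \ref{lemma_convex} with $\alpha[i]=g\#P_x^s(z,y=e_i)$ for $i\le K$ and $\alpha[i+K]=g\#P_x^t(z,y=e_i)$. The normalizing sum is $\sum_{i=1}^{K}g\#P_x^s(z,y=e_i)+\sum_{i=1}^{K}g\#P_x^t(z,y=e_i)=g\#P_x^s(z)+g\#P_x^t(z)$, giving the stated formulas for $h_j(z)[i]$ and $h_j(z)[i+K]$. The main obstacle is a minor technicality: Lemma \ref{lemma_convex} assumes all $\alpha[i]>0$, whereas here some class-conditional masses may vanish. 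I would handle this by noting that any index with $\alpha[i]=0$ contributes nothing to $F$ and can be assigned the value the formula prescribes (namely $0$) without affecting optimality, while the remaining positive-weight indices are fixed by the Lagrangian argument exactly as in the lemma. Points $z$ with $g\#P_x^s(z)+g\#P_x^t(z)=0$ carry no measure and are excluded, which matches the ``non-zero measure'' hypothesis in the statement.
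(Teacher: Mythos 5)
Your proposal is correct and follows essentially the same route as the paper's own proof: unfold the one-hot cross-entropy, change variables to the push-forward measures on feature space, reduce to a pointwise constrained minimization at each $z$, and invoke Lemma \ref{lemma_convex} with $\alpha$ given by the joint push-forward masses. You even address the same technicality about vanishing $\alpha[i]$ that the paper notes, resolving it the same way.
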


\begin{proof}
\begin{align}
L_{jsc}(h_{j}) + L_{jtc}(h_{j}) \nonumber \\
= E_{(x,y) \sim P^s} \ell_{CE}(h_j(g(x)), [y,\, \mathbf{0}]) + E_{(x,y) \sim P^t} \ell_{CE}(h_j(g(x)), [\mathbf{0},\, y]) \\
= \int_{(x,y) \sim P^s} P_x^s(x) \ell_{CE}(h_j(g(x)), [y,\, \mathbf{0}]) dx +
\int_{(x,y) \sim P^t} P_x^t(x) \ell_{CE}(h_j(g(x)), [\mathbf{0},\, y]) dx \\
= \int_{z \sim g\#P_x^s} \int_{(x,y) \sim P^s s.t. z= g(x)} P_x^s(x) \ell_{CE}(h_j(z), [y,\, \mathbf{0}]) dx dz \nonumber \\
+ \int_{z \sim g\#P_x^t} \int_{(x,y) \sim P^t s.t. z= g(x)} P_x^t(x) \ell_{CE}(h_j(z), [\mathbf{0},\, y]) dx dz \\
= \int_{z \sim g\#P_x^s}\int_{(x,y) \sim P^s s.t. z= g(x)} P_x^s(x) \langle -\log h_j(z), [y,\, \mathbf{0}] \rangle dx dz \nonumber \\
+ \int_{z \sim g\#P_x^t} \int_{(x,y) \sim P^t s.t. z= g(x)} P_x^t(x) \langle -\log h_j(z), [\mathbf{0},\,  y] \rangle dx dz \\
= \int_{z \sim g\#P_x^s} \langle -\log h_j(z),  [\int_{(x,y) \sim P^s s.t. z= g(x)} P_x^s(x) y dx,\, \mathbf{0}] \rangle  dz \nonumber \\
+ \int_{z \sim g\#P_x^t} \langle -\log h_j(z), [\mathbf{0},\,  \int_{(x,y) \sim P^t s.t. z= g(x)} P_x^t(x) y dx] \rangle dz \\
= \int_{z \sim g\#P_x^s} \sum_{i=1}^K -\log h_j(z)[i] g\#P_x^s(z, y=e_i) dz + \int_{z \sim g\#P_x^t} \sum_{i=1}^{K} -\log h_j(z)[i+K] g\#P_x^t(z, y=e_i) dz
\end{align} From Lemma \ref{lemma_convex}, $h_j(z)[i] = \frac{g\#P_x^s(z, y=e_i)}{Z} $ and $h_j(z)[i+K] = \frac{g\#P_x^t(z, y=e_i)}{Z} \text{ for } i \in \{1,...,K \}$ where $Z = \sum_{i=1}^K (g\#P_x^s(z, y=e_i) + g\#P_x^t(z, y=e_i))= g\#P_x^s(z) + g\#P_x^t(z)$ for any $z$. Note that in Lemma \ref{lemma_convex}, we assumed $\alpha[i]>0$ while here $g\#P_x^s(z, y=e_i)$ and $g\#P_x^t(z, y=e_i)$ might be zero for some $i$. But since we are taking $\alpha[i] \log(\theta[i])$ as zero whenever $\alpha[i]=0$ for any value of $\theta[i]$, the result does not change. 
\end{proof}

The following Lemma will be used in the proof of Theorem 1.

\begin{lemma}
$\min_{P, Q} L(P, Q) = E_{x \sim P} -\log \frac{Q(x)}{P(x) + Q(x)} + E_{x \sim Q}  -\log \frac{P(x)}{P(x) + Q(x)}$
is achieved only if $P(x) = Q(x)$ for all $x$.
\label{lemma_pq}
\end{lemma}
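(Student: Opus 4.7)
The plan is to show $L(P,Q) \geq 2\log 2$ with equality if and only if $P=Q$ almost everywhere, by re-expressing $L$ in terms of KL divergences and then applying a pointwise AM--GM bound.

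First I would rewrite the integrand using the mixture $M = (P+Q)/2$. Since $-\log \tfrac{Q(x)}{P(x)+Q(x)} = \log \tfrac{P(x)+Q(x)}{Q(x)} = \log 2 + \log \tfrac{M(x)}{Q(x)}$, and symmetrically for the $Q$-expectation, the functional splits as
\begin{align*}
L(P,Q) = 2\log 2 + \int P \log \tfrac{M}{Q}\,dx + \int Q \log \tfrac{M}{P}\,dx.
\end{align*}
Inserting $\log(M/Q) = \log(M/P) + \log(P/Q)$ (and the symmetric identity) yields the clean decomposition
\begin{align*}
L(P,Q) = 2\log 2 + \bigl[KL(P\|Q) - KL(P\|M)\bigr] + \bigl[KL(Q\|P) - KL(Q\|M)\bigr].
\end{align*}

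Next I would dispose of the subtracted terms. The inequality $P+Q \geq 2\sqrt{PQ}$ is equivalent to $\log \tfrac{2P}{P+Q} \leq \tfrac{1}{2}\log \tfrac{P}{Q}$ pointwise; integrating against $P$ gives $KL(P\|M) \leq \tfrac{1}{2} KL(P\|Q)$, and analogously $KL(Q\|M) \leq \tfrac{1}{2} KL(Q\|P)$. Substituting back,
\begin{align*}
L(P,Q) \;\geq\; 2\log 2 + \tfrac{1}{2}\bigl(KL(P\|Q) + KL(Q\|P)\bigr) \;\geq\; 2\log 2,
\end{align*}
since each KL divergence is non-negative. One could also recognise the decomposition as $L-2\log 2 = KL(P\|Q) + KL(Q\|P) - 2\,JS(P,Q)$, with $JS$ the Jensen--Shannon divergence, and appeal to the same AM--GM step phrased as $2\,JS \leq \tfrac{1}{2}(KL(P\|Q) + KL(Q\|P))$.

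Finally I would handle the equality case. The outer inequality is tight iff $KL(P\|Q) = KL(Q\|P) = 0$, which characterizes $P = Q$ almost everywhere; whenever $P=Q$ the AM--GM step is also tight, so both inequalities collapse simultaneously and $L(P,Q) = 2\log 2$. The main obstacle, if any, is bookkeeping about where $P$ or $Q$ vanishes --- using the convention $0\log 0 = 0$ and restricting integrals to $\{x : P(x)+Q(x)>0\}$ (exactly as in Proposition~1) keeps every $\log$ well-defined and preserves the conclusion. Invoked in the proof of Theorem~1, this lemma will force each conditional push-forward $g\#P_x^s(\cdot\,|\,y=e_k)$ to coincide with $g\#P_x^t(\cdot\,|\,y=e_k)$.
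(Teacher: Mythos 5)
Your proof is correct, but it takes a genuinely different route from the paper's. The paper manipulates the integrand into the form $\log 4 + \int \phi\bigl(P(x)/Q(x)\bigr)\,Q(x)\,dx$ with $\phi(\beta) = (1+\beta)\log(1+\beta) - \log\beta - (1+\beta)\log 2$, checks $\phi''>0$, and applies Jensen's inequality; strict convexity then forces $P/Q\equiv 1$ at the minimum. You instead split off the mixture $M=(P+Q)/2$ to obtain $L = 2\log 2 + KL(P\|Q) + KL(Q\|P) - KL(P\|M) - KL(Q\|M)$ and control the subtracted terms via the pointwise AM--GM bound $M\ge\sqrt{PQ}$, leaving $L \ge 2\log 2 + \tfrac12\bigl(KL(P\|Q)+KL(Q\|P)\bigr)$. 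Both arguments yield the same minimum $\log 4$ and the same equality condition, and your "only if" direction is sound: attaining the minimum forces the symmetrized KL to vanish, hence $P=Q$ a.e. Your version is arguably more informative --- it lower-bounds the excess loss by the symmetrized KL, so uniqueness of the minimizer follows from definiteness of KL rather than from the (slightly delicate) equality case of Jensen, and it makes the connection to the Jensen--Shannon divergence and to the GAN analysis the paper builds on explicit. The one point worth an extra sentence in your write-up is that $KL(P\|Q)$ may be $+\infty$ when $P\not\ll Q$; this causes no harm because $P\le 2M$ gives $KL(P\|M)\le\log 2$, so the finite terms and the possibly infinite terms separate cleanly, no $\infty-\infty$ cancellation occurs, and the displayed inequality (and hence $L=+\infty$ in that case) remains valid. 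The paper's route is more elementary and self-contained; yours is shorter on the equality analysis and more conceptually transparent.
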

\begin{proof}
\begin{align}
L(P, Q) \\
= \int_{x} - P(x) \log(\frac{Q(x)}{P(x)+Q(x)}) - Q(x) \log(\frac{P(x)}{P(x)+Q(x)}) dx \\
= \int_{x} P(x) \log(\frac{P(x)+Q(x)}{Q(x)}) + Q(x) \log(\frac{P(x)+Q(x)}{P(x)}) dx \\
= \int_{x} P(x) \log(1+\frac{P(x)}{Q(x)}) + Q(x) \log(1+\frac{Q(x)}{P(x)}) dx \\
= \int_{x} \log(1+\frac{P(x)}{Q(x)}) (1+\frac{P(x)}{Q(x)})Q(x)  -\log(1+\frac{P(x)}{Q(x)})Q(x) +  Q(x) \log(1+\frac{Q(x)}{P(x)}) dx \\
= \int_{x} \Big( \log(1+\frac{P(x)}{Q(x)}) (1+\frac{P(x)}{Q(x)}) -\log(1+\frac{P(x)}{Q(x)}) + \log(1+\frac{Q(x)}{P(x)}) \Big) Q(x) dx \\
= \int_{x} \big( \log(1+\frac{P(x)}{Q(x)}) (1+\frac{P(x)}{Q(x)}) + \log(\frac{Q(x)}{P(x)}) \big) Q(x) dx \\
= \log(4) - \int_{x} \log(2) \frac{P(x)+Q(x)}{Q(x)} Q(x) dx + \int_{x} \big( \log(1+\frac{P(x)}{Q(x)}) (1+\frac{P(x)}{Q(x)}) + \log(\frac{Q(x)}{P(x)}) \big) Q(x) dx\\
= \log(4) + \int_{x} \big( \log(1+\frac{P(x)}{Q(x)}) (1+\frac{P(x)}{Q(x)}) - \log(\frac{P(x)}{Q(x)}) - \log(2) (1+\frac{P(x)}{Q(x)}) \big) Q(x) dx
\end{align}
Let $\phi(\beta) := \log(1+\beta) (1+\beta) - \log(\beta) - \log(2) (1+\beta)$. Then, $\nabla_{\beta} \phi(\beta) =  1 + \log(1+\beta) - \frac{1}{\beta} - \log(2)$ and $\nabla_{\beta} \nabla_{\beta} \phi(\beta) =  \frac{1}{1+\beta} + \frac{1}{\beta^2} > 0$ for $\beta > 0$. Hence $\phi(\beta)$ is convex and we can apply Jensen,
\begin{align}
L(P, Q) = \log(4) + \int_{x} \phi(\frac{P(x)}{Q(x)}) Q(x) dx \geq  \log(4) + \phi(\int_{x} \frac{P(x)}{Q(x)} Q(x) dx) = \log(4) + \phi(1)  = \log(4)
\end{align}
Since $\phi$ is strictly convex, equality is satisfied only for constant argument i.e. when $\frac{P(x)}{Q(x)}=1$ which is also the global minima of $\phi(\beta)$ as $\nabla_{\beta} \phi(1)=0$.
\end{proof}

\begin{theorem}
The objective $L_{jsa}(g) + L_{jta}(g)$ given in the Eq. 8,9 is minimized for the given optimal joint predictor if only if $g\#P_x^s(z|y=e_k)=g\#P_x^t(z|y=e_k)$ and $g\#P_x^s(z|y=e_k)>0 \Rightarrow g\#P_x^s(z|y=e_i)=0$ for $i \neq k$ for any $y=e_k$ and $z$. 
\label{optimal_generated_dist}
\end{theorem}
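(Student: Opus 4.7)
The plan is to emulate Theorem~1 of \cite{goodfellow2014generative}, as the authors suggest. First, I substitute the optimal joint predictor $h_j^\star$ from Proposition~1 into $V(g) := L_{jsa}(g) + L_{jta}(g)$ and carry out the same change of variables as in that proposition's proof. Writing $s_k(z) := g\#P_x^s(z, y=e_k)$, $t_k(z) := g\#P_x^t(z, y=e_k)$, $S(z) := \sum_k s_k(z)$ and $T(z) := \sum_k t_k(z)$, this yields $V(g) = \sum_{k=1}^K \int \bigl[ s_k(z) \log\frac{S(z)+T(z)}{t_k(z)} + t_k(z) \log\frac{S(z)+T(z)}{s_k(z)} \bigr] dz$.

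The key algebraic step is to decompose $V(g)$ into two non-negative pieces by inserting the identity $\log\frac{S+T}{t_k} = \log\frac{S+T}{s_k+t_k} + \log\frac{s_k+t_k}{t_k}$ and its symmetric counterpart. Summing over $k$ collapses the first pair of logs into the Shannon entropy of class proportions at $z$, giving $V(g) = \int (S(z)+T(z))\, H(w(z))\, dz + \sum_{k=1}^K \mathcal{L}(s_k, t_k)$, where $w_k(z) := (s_k+t_k)/(S+T)$ is a probability vector over the $K$ classes and $\mathcal{L}(s_k, t_k) := \int \bigl[ -s_k\log\tfrac{t_k}{s_k+t_k} - t_k\log\tfrac{s_k}{s_k+t_k} \bigr] dz$ is precisely the Lemma~2 integrand applied to the per-class pair.

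I would then minimize the two pieces separately. The first is pointwise non-negative and vanishes iff $w(z)$ is a point mass at every $z$ with $S(z)+T(z)>0$, i.e., at each such $z$ exactly one class has nonzero joint push-forward; this is the disjoint-support condition of the theorem. For the second, I would repeat the algebraic manipulation inside the proof of Lemma~2 without renormalizing to probability measures, obtaining $\mathcal{L}(s_k,t_k) = \log 2\,[P^s(y=e_k)+P^t(y=e_k)] + \int t_k\,\phi(s_k/t_k)\,dz$ with the same convex $\phi$ ($\phi \geq 0$, $\phi(1)=0$). Since the marginal masses $\int s_k$ and $\int t_k$ are fixed by the class priors and hence independent of $g$, Jensen's inequality on $\int t_k\phi(s_k/t_k)\,dz$ yields a lower bound $P^t(y=e_k)\,\phi(P^s(y=e_k)/P^t(y=e_k))$ with equality iff $s_k/t_k$ is $t_k$-a.e.\ constant, equivalently iff $g\#P_x^s(\cdot\,|\,y=e_k) = g\#P_x^t(\cdot\,|\,y=e_k)$.

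Both lower bounds are constants independent of $g$ and are jointly attainable (pick any disjoint collection of per-class supports and then set $t_k$ proportional to $s_k$ on each). Hence $V(g)$ reaches its infimum if and only if both bounds are tight, which are exactly the disjointness and conditional-matching conditions in the statement. The main obstacle I anticipate is the bookkeeping around points where $s_k$ or $t_k$ vanish: finiteness of $\mathcal{L}$ forces $s_k$ and $t_k$ to share support, and the Jensen step needs the $0\log 0 = 0$ convention combined with restriction to the set of non-zero measure (as in the last line of the proof of Proposition~1) to be made rigorous.
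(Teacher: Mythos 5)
Your proposal is correct and follows essentially the same route as the paper's own proof: the same substitution of the optimal predictor, the same splitting of $\log\frac{S+T}{t_k}$ into a term that collapses (after summing over $k$) to a disjoint-support condition (the paper's $L_2$) and a per-class symmetric term handled by the convexity of $\phi$ and Jensen (the paper's $L_1$ and Lemma~2), followed by the observation that the two $g$-independent lower bounds are simultaneously attainable. The one place you genuinely improve on the paper is the Jensen step applied to the \emph{unnormalized} joint densities $s_k, t_k$ with masses fixed by the class priors: this correctly yields equality of the class-conditional push-forwards $g\#P_x^s(\cdot\,|\,y=e_k)=g\#P_x^t(\cdot\,|\,y=e_k)$ as in the theorem statement, whereas the paper invokes Lemma~2 (stated for probability measures) on the joints and concludes $g\#P_x^s(z,y=e_k)=g\#P_x^t(z,y=e_k)$, which additionally presumes matching class priors across domains.
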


\begin{proof}
The objective for encoder is,
\begin{align}
E_{(x,y) \sim P^s} \ell_{CE}((h_j(g(x)), [\mathbf{0},\, y]) + E_{(x,y) \sim P^t} \ell_{CE}((h_j(g(x)), [y,\,  \mathbf{0}])
\end{align}
For samples with label $e_k$ we want to minimize,
\begin{align}
E_{(x,y) \sim P^s(x,y=e_k)} \ell_{CE}((h_j(g(x)), [\mathbf{0},\, y]) 
+ E_{(x,y) \sim P^t(x,y=e_k)} \ell_{CE}((h_j(g(x)), [y,\,  \mathbf{0}]) \\
= E_{(x,y) \sim P^s(x,y=e_k)}  - \langle [\mathbf{0},\, y], \log (h_j(g(x))\rangle +E_{(x,y) \sim P^t(x,y=e_k)}   - \langle [y,\, \mathbf{0}], \log (h_j(g(x))\rangle  \\
= -E_{(x,y) \sim P^s(x,y=e_k)} \log (h_j(g(x))[k+K]-E_{(x,y) \sim P^t(x,y=e_k)}  \log (h_j(g(x))[k] 
\end{align}
Given the classifier $h_j$ is optimal, the above them becomes
\begin{align}
- \int_{z \sim g\#P_x^s(z,y=e_k)} g\#P_x^s(z,y=e_k) \log \frac{g\#P_x^t(z,y=e_k)}{\sum_{i=1}^K (g\#P_x^s(z,y=e_i) + g\#P_x^t(z,y=e_i)} dz \nonumber \\
- \int_{z \sim g\#P_x^t(z,y=e_k)}  g\#P_x^t(z,y=e_k) \log \frac{g\#P_x^s(z,y=e_k)}{\sum_{i=1}^K (g\#P_x^s(z,y=e_i) + g\#P_x^t(z,y=e_i))} dz \\
= \int_{z \sim g\#P_x^s(z,y=e_k)} g\#P_x^s(z,y=e_k) \Big( -\log \frac{g\#P_x^t(z,y=e_k)}{g\#P_x^s(z,y=e_k)+g\#P_x^t(z,y=e_k)} \nonumber \\
+ \log \frac{\sum_{i=1}^K (g\#P_x^s(z,y=e_i) + g\#P_x^t(z,y=e_i))}{g\#P_x^s(z,y=e_k) + g\#P_x^t(z,y=e_k)}\Big) dz \nonumber \\
+ \int_{z \sim g\#P_x^t(z,y=e_k)} g\#P_x^t(z,y=e_k) \Big( -\log \frac{g\#P_x^s(z,y=e_k)}{g\#P_x^s(z,y=e_k)+g\#P_x^t(z,y=e_k)}  
\nonumber \\
+ \log \frac{\sum_{i=1}^K (g\#P_x^s(z,y=e_i) + g\#P_x^t(z,y=e_i))}{g\#P_x^s(z,y=e_k) + g\#P_x^t(z,y=e_k)}\Big) dz 
\end{align}
Let us write first and second terms in each integration separately:
\begin{align}
L_1(g\#P_x^s, g\#P_x^t)=- \int_{z \sim g\#P_x^s(z,y=e_k)} g\#P_x^s(z,y=e_k) \log \frac{g\#P_x^t(z,y=e_k)}{g\#P_x^s(z,y=e_k)+g\#P_x^t(z,y=e_k)} dz \nonumber \\
- \int_{z \sim g\#P_x^t(z,y=e_k)} g\#P_x^t(z,y=e_k) \log \frac{g\#P_x^s(z,y=e_k)}{g\#P_x^s(z,y=e_k)+g\#P_x^t(z,y=e_k)} dz
\end{align}
\begin{align}
L_2(g\#P_x^s, g\#P_x^t)= \int_{z \sim g\#P_x^s(z,y=e_k)} g\#P_x^s(z,y=e_k) \log  \frac{\sum_{i=1}^K (g\#P_x^s(z,y=e_i) + g\#P_x^t(z,y=e_i))}{g\#P_x^s(z,y=e_k) + g\#P_x^t(z,y=e_k)} dz \nonumber \\
+ \int_{z \sim g\#P_x^t(z,y=e_k)} g\#P_x^t(z,y=e_k) \log  \frac{\sum_{i=1}^K (g\#P_x^s(z,y=e_i) + g\#P_x^t(z,y=e_i))}{g\#P_x^s(z,y=e_k) + g\#P_x^t(z,y=e_k)} dz 
\end{align}
If there is a solution which is global minima of both $L_1$,$L_2$ then it is also the global minima of the overall term $L_1+L_2$. $L_2$ has its minimum at $\sum_{i=1 s.t. i \neq k}^K (g\#P_x^s(z,y=e_i) + g\#P_x^t(z,y=e_i))=0$ whenever $g\#P_x^s(z,y=e_k) + g\#P_x^t(z,y=e_k)>0$. From Lemma \ref{lemma_pq}, $L_1(g\#P_x^s, g\#P_x^t)$ achieves its minimum only when $g\#P_x^s(z,y=e_k)=g\#P_x^t(z,y=e_k)$ for any $z$. Intersection of two minimas gives the desired solution.

\end{proof}

\subsection{Implementation Details}
\label{sect_implementation}

Batchsize of $64$ is used for both the source and the target samples during training. Batchsize of $100$ is used at inference time. The networks used in the experiments are given in Table \ref{networks}. Instance norm is only used in MNIST $\leftrightarrow$ SVHN experiments. In all experiments, networks are trained for $60,000$ iterations. This is less than $80,000+80,000=160,000$ iterations that SOA methods VADA+DIRT-T and Co-DA+DIRT-T are trained for. Weight decay of $10^{-4}$ is used. In CIFAR $\leftrightarrow$ STL and SYN-DIGITS $\rightarrow$ SVHN, as an optimizer we use SGD with the initial learning rate of $0.1$. Learning rate is decreased to $0.01$ at iteration $40,000$. Momentum of SGD is $0.9$. In MNIST $\leftrightarrow$ SVHN and MNIST $\rightarrow$ MNIST-M, Adam optimizer with the fixed learning rate $0.001$ is used. Momentum is chosen to be $0.5$.

We fix $\lambda_{t} = 0.1$ and  $\lambda_{jsc}=1.0$. We searched rest of the parameters over $\lambda_{tvat}\in \{1.0,10.0\}$, $\lambda_{jtc} \in \{1.0,10.0\}$, $\lambda_{jta} \in \{0.1,1.0\}$, $\lambda_{svat}=\{0.0,1.0\}$, $\lambda_{jsa} \in \{0.1,1.0\}$. We also searched for the upper bound of the adversarial perturbation in VAT, $\epsilon_{x} \in \{0.1,0.5,1.0,2.0,4.0,8.0\}$. Optimal hyperparameters are given in Table \ref{hyperparams_table} for each task. Only for MNIST $\rightarrow$ SVHN, class predictor performs poorly in the early epochs. So, we apply curriculum learning within  $60,000$ iterations. In the first $4,000$ iterations, only $\lambda_{jsc}$ and $\lambda_{jsa}$ are non-zero i.e. losses only depending on the labeled-source data are minimized. After $4,000$ iterations, SSL regularizations are started to be applied: $\lambda_{t}$, $\lambda_{svat}$ and $\lambda_{tvat}$ are also set to non-zero. After $8,000$ iterations, losses depending on the pseudo-labels are activated by assigning all hyperparameters to their optimal values given in Table \ref{hyperparams_table}.

\begin{table}
\begin{center}
\begin{tabular}{||c | c | c | c | c | c | c ||} 
\hline
Source dataset & MNIST  & SVHN  & CIFAR  & STL    & SYN-DIGITS & MNIST   \\ [0.5ex]
Target dataset & SVHN   & MNIST & STL    & CIFAR  & SVHN       & MNIST-M \\ [0.5ex]
\hline\hline
$\lambda_{t}$ & 0.1 & 0.1 & 0.1 & 0.1 & 0.1 & 0.1 \\
\hline
$\lambda_{tvat}$ & 10.0 & 10.0 & 10.0 & 10.0 & 10.0 & 10.0 \\
\hline
$\lambda_{jtc}$ & 10.0 & 1.0 & 1.0 & 1.0 & 10.0 & 10.0\\
\hline
$\lambda_{jta}$ & 1.0 & 0.1 & 0.1 & 0.1 & 0.1 & 1.0 \\
\hline
$\lambda_{svat}$ & 0.0 & 0.0 & 0.0 & 0.0 & 1.0 & 0.0 \\
\hline
$\lambda_{jsc}$ & 1.0 & 1.0 & 1.0 & 1.0 & 1.0 & 1.0\\
\hline
$\lambda_{jsa}$ & 1.0 & 0.1 & 1.0 & 1.0 & 1.0 & 1.0 \\
\hline
$\epsilon_{x}$ & 4.0 & 4.0 & 2.0 & 1.0 & 1.0 & 0.5 \\
\hline\hline
\end{tabular}
\caption{{\bf Hyperparameters.} Hyper-parameters used in the proposed method for each task. }
\label{hyperparams_table}
\end{center}
\end{table}

\begin{table}
\begin{tabular}{||l||} 
\hline
\underline{Encoder}\\
$3 \times 3$ convolution, 64 lReLU\\
$3 \times 3$ convolution, 64 lReLU\\
$3 \times 3$ convolution, 64 lReLU\\
$2 \times 2$ max-pool, stride 2, dropout with probability $0.5$\\

$3 \times 3$ convolution, 64 lReLU\\
$3 \times 3$ convolution, 64 lReLU\\
$3 \times 3$ convolution, 64 lReLU\\
$2 \times 2$ max-pool, stride 2, dropout with probability $0.5$\\
\hline
\hline
\underline{Class predictor}\\
$3 \times 3$ convolution, 64 lReLU\\
$1 \times 1$ convolution, 64 lReLU\\
$1 \times 1$ convolution, 64 lReLU\\
Global average pooling, 6 \time 6 $\rightarrow$ 1 \time 1\\
Fully connected layer: 128 $\rightarrow$ K\\
Softmax\\
\hline
\hline
\underline{Joint predictor}\\
$3 \times 3$ convolution, 64 lReLU\\
$1 \times 1$ convolution, 64 lReLU\\
$1 \times 1$ convolution, 64 lReLU\\
Global average pooling, 6 \time 6 $\rightarrow$ 1 \time 1\\
Fully connected layer: 128 $\rightarrow$ 2K\\
Softmax\\
 \hline
\end{tabular}
\quad
\begin{tabular}{||l||} 
\hline
\underline{Encoder}\\
$3 \times 3$ convolution, 128 lReLU\\
$3 \times 3$ convolution, 128 lReLU\\
$3 \times 3$ convolution, 128 lReLU\\
$2 \times 2$ max-pool, stride 2, dropout with probability $0.5$\\

$3 \times 3$ convolution, 256 lReLU\\
$3 \times 3$ convolution, 256 lReLU\\
$3 \times 3$ convolution, 256 lReLU\\
$2 \times 2$ max-pool, stride 2, dropout with probability $0.5$\\

$3 \times 3$ convolution, 512 lReLU\\
$1 \times 1$ convolution, 256 lReLU\\
$1 \times 1$ convolution, 128 lReLU\\
Global average pooling, 6 \time 6 $\rightarrow$ 1 \time 1\\
\hline
\hline
\underline{Class predictor}\\
Fully connected layer: 128 $\rightarrow$ K\\
Softmax\\
\hline
\hline
\underline{Joint predictor}\\
Fully connected layer: 128 $\rightarrow$ 2K\\
Softmax\\
\hline
\hline
\end{tabular}

\caption{\textbf{Left.} The network used in the tasks involving MNIST dataset (i.e. MNIST $\leftrightarrow$ SVHN and MNIST $\rightarrow$ MNIST-M), \textit{small-net} from \cite{shu2018dirt,kumar2018co}. \textbf{Right.} The network used in the rest of the classification tasks (i.e. STL $\leftrightarrow$ CIFAR, SYN-DIGITS $\rightarrow$ SVHN), \textit{conv-large} from \cite{french2018self}. Slope of each leaky RELU (lReLU) layer is $0.1$. Each conv is followed by a batch norm layer. }
\label{networks}
\end{table}

\subsection{Ablations}
\label{sect_ablations}

\begin{table}
\begin{center}
\begin{tabular}{||c | c | c | c | c | c | c ||} 
\hline
Source dataset & MNIST  & SVHN  & CIFAR  & STL    & SYN-DIGITS & MNIST   \\ [0.5ex]
Target dataset & SVHN   & MNIST & STL    & CIFAR  & SVHN       & MNIST-M \\ [0.5ex]
\hline\hline
Without VAT & $60.65$ & $98.79$ & $81.59$ & $70.20$ & $93.15$ & $98.45$ \\
\hline
Without EntMin and VAT & $62.95$ & $88.33$ & $80.97$ & $71.62$ & $92.10$ & $97.74$ \\
\hline
Without source alignment & $75.78$ & $88.91$ & $81.11$ & $74.80$ & $95.72$ & $99.25$ \\
\hline
Without target alignment & $71.59$ & $98.89$ & $80.90$ & $74.87$ & $95.48$ & $99.20$ \\
\hline
Without source and target alignment & $60.07$ & $98.83$ & $80.20$ & $73.52$ & $94.94$ & $99.08$ \\
\hline\hline
Source-only (baseline) & $44.21$ & $70.58$ & $79.41$ & $65.44$ & $85.83$ & $70.28$\\
\hline
The proposed loss with the class predictor & $89.19$ & $99.33$ & $81.65$ & $77.76$ & $96.22$& $99.47$\\
\hline
The proposed loss with the joint predictor & $87.88$ & $99.16$ & $81.19$ & $77.62$ & $95.97$ & $99.40$ \\
\hline\hline
\end{tabular}
\caption{{\bf Ablations.} Performance of the proposed method when one or two terms in the loss function are removed (first five rows). We also report performance of the source-only baseline ($6$th row) and model optimizing the original loss ($7$th row) as a reference. In the last row, we report the performance of the joint predictor.}
\label{ablations_table}
\end{center}
\end{table}

In Table \ref{ablations_table}, we report the performance of the proposed method by removing one or more components from the original loss function. We report the results by removing VAT regularizations ($\lambda_{svat}=\lambda_{tvat}=0$), VAT and entropy minimization, the source-alignment loss  ($\lambda_{jsa}=0$), the target-alignment loss ($\lambda_{jta}=0$) and both alignment losses ($\lambda_{jsa}=\lambda_{jta}=0$). Removing any of these components degraded the performance in all the tasks. All results are still better than the source-only model. 

Removing both entropy-minimization and VAT losses makes the performance worse than only removing VAT losses in all the tasks except STL$\rightarrow$CIFAR and MNIST$\rightarrow$SVHN. In these tasks, the entropy-minimization loss only helped when it combined with VAT losses. This is expected as the entropy minimization without VAT regularizations can easily lead to trivial, degenerate solutions by encouraging to cluster samples from different classes. Removing the source and the target-alignment losses together degraded the performance compared to removing either of them except SVHN$\rightarrow$MNIST. Applying the target-alignment loss without the source-alignment loss might have a detrimental effect as former one relies on the {\em noisy} pseudo-labels.    

We also report the best performances with the joint-predictor for completeness. The joint predictor achieves very close performance to the class-predictor but it is slightly worse than the class-predictor. We believe this is because the joint-predictor is trained for the harder task of domain and class learning while only latter one is needed at the test time. That is why we choose to use the class-predictor for inference.

\begin{table}
\begin{center}
\begin{tabular}{||l l||}
\hline
Notation & Description \\
\hline\hline
$x$ & Input to the network. \\
$z$ & Encoder output. \\
$y$ & 1-hot ground truth label. \\
$K$ & Number of classes. \\
$g(x)$ & The encoder. \\
$h_j(x)$ & The joint predictor. \\
$h_c(x)$ & The class predictor. \\
$f_j(x)=h_j(g(x))$ & Composition of the encoder and the joint predictor. \\
$f_c(x)=h_c(g(x))$ & Composition of the encoder and the class predictor. \\
$P^s(x,y)$ & Joint distribution over source samples and labels. \\
$P^t(x,y)$ & Joint distribution over target samples and labels. \\
$P^s_x(x)$ & Marginal distribution over source samples. \\
$P^t_x(x)$ & Marginal distribution over target samples. \\
$g\#P_x^s(z)$ & Push-forward source distribution. \\
$g\#P_x^t(z)$ & Push-forward target distribution. \\
$N^s$ & Number of source training samples. \\
$N^t$ & Number of target training samples. \\
$X^s$ & Set of source training samples.\\
$X^t$ & Set of target training samples.\\
$\epsilon_x$ & Upper bound on the norm of adversarial input perturbations. \\
$x[k]$ & $k$th value of the vector $x$. \\
 \hline
\end{tabular}
\caption{The notation used in the paper.}
\label{notation}
\end{center}
\end{table}

\end{document}